\newcommand{\bx}{\mathbf{x}}
\newcommand{\bu}{\mathbf{u}}
\newcommand{\by}{\mathbf{y}}
\newcommand{\bz}{\mathbf{z}}
\newcommand{\bv}{\mathbf{v}}
\newtheorem{theorem}{Theorem}
\newtheorem{proposition}{Proposition}
\newtheorem{lemma}{Lemma}
\newtheorem{corollary}{Corollary}
\newtheorem{definition}{Definition}
\newcommand\ignore[1]{}
\newcommand{\lh}{\textcolor{red}}
\DeclarePairedDelimiter\abs{\lvert}{\rvert}%
\DeclarePairedDelimiter\norm{\lVert}{\rVert}%
\DeclareMathOperator*{\argmax}{arg\,max}
\DeclareMathOperator*{\argmin}{arg\,min}
\newcommand{\defeq}{\vcentcolon=}
\let\oldabs\abs
\def\abs{\@ifstar{\oldabs}{\oldabs*}}
\let\oldnorm\norm
\def\norm{\@ifstar{\oldnorm}{\oldnorm*}}
\begin{document}

\title{Fair Classification and Social Welfare}
\author{Lily Hu and Yiling Chen}

\maketitle

\begin{abstract}
Now that machine learning algorithms lie at the center of many important resource allocation pipelines, computer scientists have been unwittingly cast as partial social planners. Given this state of affairs, important questions follow. What is the relationship between fairness as defined by computer scientists and longer standing notions of social welfare? In this paper, we present a welfare-based analysis of classification and fairness regimes. We translate a loss minimization program into a social welfare maximization problem with a set of implied welfare weights on individuals and groups---weights that can then be analyzed from a distribution justice lens. Working in the converse direction, we ask what the space of possible labelings is for a given dataset $\mathcal{X}$ and hypothesis class $\mathcal{H}$. We provide an algorithm that answers this question with respect to linear hyperplanes in $\mathbb{R}^d$ that runs in $O(n^dd)$. Our main findings on the relationship between fairness criteria and welfare center on sensitivity analyses of fairness-constrained empirical risk minimization programs. We characterize the ranges of $\Delta \epsilon$ perturbations to a fairness parameter $\epsilon$ that yield better, worse, and neutral outcomes in utility for individuals and by extension, groups. We show that applying more strict fairness criteria that are codified as parity constraints, can worsen welfare outcomes for both groups. More generally, always preferring ``more fair'' classifiers does not abide by the Pareto Principle---a fundamental axiom of social choice theory and welfare economics. Recent work in machine learning has rallied around these notions of fairness as critical to ensuring that algorithmic systems do not have disparate negative impact on disadvantaged social groups. By showing that these constraints often fail to translate into improved outcomes for these groups, we cast doubt on their effectiveness as a means to ensure fairness and justice.  
\end{abstract}

\section{Introduction}
In his 1979 Tanner Lectures, Amartya Sen noted that since nearly all theories of fairness are founded on an equality of \textit{some} sort, the heart of the issue rests on clarifying the ``equality of what?'' problem \cite{sen}. The field of fair machine learning has not escaped this essential question. Does machine learning have an obligation to assure probabilistic equality of outcomes across various social groups \cite{feldman2015certifying,hardt2016equality}? Or does it simply owe an equality of treatment \cite{dwork2012fairness}? Does fairness demand that individuals (or groups) be subject to equal mistreatment rates \cite{zafar2017fairness,bechavod2017learning}? Or does being fair refer only to avoiding some intolerable level of algorithmic error? 

Currently, the task of accounting for fair machine learning cashes out in the comparison of myriad metrics---probability distributions, error likelihoods, classification rates---sliced up every way possible to reveal the range of inequalities that may arise before, during, and after the learning process. But as shown in Chouldechova \cite{chouldechova2017fair} and Kleinberg et al. \cite{kleinberg2017inherent}, fundamental statistical incompatibilities rule out any solution that can satisfy all parity metrics. Fairness-constrained loss minimization offers little guidance on its own for choosing among the fairness desiderata, which appear incommensurable and result in different impacts on different individuals and groups. We are thus left with the harsh but unavoidable task of adjudicating between these measures and methods. How ought we decide? For a given application, who actually benefits from the operationalization of a certain fairness constraint? This is a basic but critical question that must be answered if we are to understand the impact that fairness constraints have on classification outcomes. Much research in fairness has been motivated by the well-documented negative impacts that these systems can have on already structural disadvantaged groups. But do fairness constraints as currently formulated in fact earn their reputation as serving to improve the welfares of marginalized social groups? 

As algorithms continue to be adopted in social environments---consider, for example, the place of predictive systems in the financial services industry---classifier performance and outcomes directly bear on individuals' welfares. In light of this new terrain of machine learning, our paper views predictions as \textit{resource allocations} awarded to individuals and by extension, to various social groups. With this orientation in mind, we build out a conceptual framework and methodology that analyzes classifications and fairness regimes from a utility and welfare-centric perspective.

In Section 3, we cast the loss minimization task at the center of supervised learning as a social welfare maximization problem prevalent in social choice theory and welfare economics. In the Planner's Problem, a social planner seeks to maximize social welfare represented as the sum of weighted utility functions, where each individual's weight represents the value placed by society on her welfare. Inverting the Planner's Problem of efficient social welfare maximization generates a question that concerns social equity: \textit{``Given a particular allocation, what is the presumptive social weight function that would yield the allocation as optimal?''} We show that the set of predictions issued by the optimal classifier of any loss minimization task can also be given as the set of optimal allocations in the Planner's Problem, over the same individuals, endowed with a given set of welfare weights. These weights lie at the heart of debates over fairness of distribution in economics.  

A demonstration of the converse result---given a social welfare maximizing allocation, what is the hypothesis that can achieve an equivalent classification?---depends on the particulars of a given dataset and the hypothesis class under a learner's consideration. In Section 4, we provide an algorithm that computes all achievable labelings in $O(n^dd)$ time for the class of linear $d$-dimensional hyperplanes. Because the fair machine learning literature focuses on classification outcomes across different protected attribute social groups \textit{e. g.}, race, this approach exhaustively records the set of utilities---defined by the number of positively labeled individuals belonging to that group---achievable for various social groups. 

Our main result is presented in Section 5 and shows that how ``fair'' a classifier is---how well it accords with a group parity constraint such as equality of opportunity or balance for false positives---does not neatly translate into statements about how it impacts different groups' welfares. Using techniques from parametric programming and finding a SVM's regularization path, we show that so long as a fairness constraint binds, \textit{i.e.}, applying the constraint changes the optimal SVM solution, tightening the $\epsilon$-level fairness constraint always leads to learner loss but does not necessarily improve classification outcomes for either group. In particular, we prove two surprising results: first, starting at any nonzero $\epsilon$-fair optimal SVM solution, there exists a $\Delta \epsilon < 0$ perturbation that tightens the fairness constraint and leads to classifier-output allocations that are weakly Pareto dominated by those issued by the ``less fair'' original classifier. Second, there are nonzero $\epsilon$-fair optimal SVM solutions, such that there exist $\Delta \epsilon < 0$ perturbations that yield classifications that are strongly Pareto dominated by those issued by the ``less fair'' original classifier. We demonstrate these findings on the Adult dataset. In general, our results show that when notions of fairness rest entirely on leading parity-based notions, always preferring more fair machine learning classifiers does not accord with the Pareto Principle, an axiom typically seen as fundamental in social choice theory and welfare economics generally. 

The purposes of our paper are twofold. The first is simply to encourage a more welfare-centric understanding of algorithmic fairness. Whenever machine learning is deployed within important social and economic processes, concerns for fairness arise when shared societal norms are in tension with the decision-maker's goals and desires. Most leading methodologies have focused on optimization of utility or welfare to the vendor, limiting our ability to answer questions about how individuals, groups, and society-at-large fare under various distributive allocations. The social welfare perspective directly engages both questions of efficiency, in the task of maximization, and equity, in the design of welfare weights. This perspective is especially enlightening when applied to sectors in which the government, acting as the Planner, maintains a strong interest in issues of distributive fairness and can justifiably make interpersonal comparisons of utility. 

We also seek to highlight the limits of conceptualizing fairness only in terms of group-based parity measures. Our results show that at current, making a system ``more fair'' as defined by popular metrics can harm the vulnerable social populations that were ostensibly meant to be served by the imposition of such constraints in the first place. Though the Pareto Principle is not without faults, the frequency with which ``more fair'' classification outcomes are welfare-wise dominated by ``less fair'' ones occurs is troublesome and should lead scholars to reevaluate the methodologies by which we understand the impact of machine learning on different social populations. 

\subsection{Related Work}
Research in fair machine learning has largely centered on first computationally defining ``fairness'' as a property of a classifier and then showing that techniques can be invented to satisfy such a notion \cite{kamishima2011fairness,dwork2012fairness,zemel2013learning,feldman2015certifying,zafar2015fairness,joseph2016fairness,hardt2016equality,pleiss2017fairness,calmon2017optimized,kusner2017counterfactual,kilbertus2017avoiding,zafar2017fairness,bechavod2017learning,kearns2017preventing,donini2018empirical,agarwal2018reductions}. Since most methods are meant to apply to learning problems generally, many such notions of fairness center on parity-based statistical metrics about a classifier's behavior on various protected social groups rather than on matters of utility or welfare. 

Most of the works that do look toward a social welfare-based framework for interpreting appeals to fairness sit at the intersection of computing and economics. Mullainathan \cite{mullainathan2018algorithmic} also makes a comparison between policies as set by machine learning systems and policies as set by a social planner. He argues that algorithmic systems that make explicit their description of a global welfare function are less likely to perpetrate biased outcomes and are more successful at ameliorating social inequities. Heidari et al. \cite{heidari2018fairness} propose using social welfare functions as fairness constraints on loss minimization programs. They suggest that a learner ought to optimize her classifier while in Rawls' original position. As a result, their approach to social welfare is closely tied with considerations of risk. Rather than integrate social welfare functions into the supervised learning pipeline, we keep the two perspectives distinct to emphasize the non-welfarist aspects of fair machine learning. By translating a machine learning classification into a social welfare allocation, we encourage a conception of fairness that refers to the social weights that various individuals and groups have in a society. There is a rich body of literature in social choice theory and welfare economics that investigates the normative and empirical bases of various distributions of social weights \cite{ackert2007social,fleurbaey2011theory,fleurbaey2015optimal,saez2016generalized}. This research directly bears on public policy issues ranging from tax schemes to social programs.   

The techniques that we use to perform sensitivity analysis of fairness constraints are related to a number of existing works. The proxy fairness constraint that we use in our instantiation of the $\epsilon$-fair SVM problem original appeared in Zafar et al.'s \cite{zafar2015fairness} work on restricting the disparate impact of machine classifiers. Their research introduces this particular proxy fairness constrained program and shows that it can be efficiently solved and well approximates target fairness constraints. We use the constraint to demonstrate our overall findings about the effect of fairness criteria on individual and group welfares. We share some of the preliminary formulations of our fair SVM problem with Donini et al. \cite{donini2018empirical} though they focus on the statistical and fairness guarantees of their generalized empirical risk minimization program. Lastly, though work on tuning hyperparameters of SVMs is far afield from questions of fairness and welfare, our analysis on the effect of $\Delta \epsilon$ fairness perturbations on welfare take advantage of methods in that line of research \cite{diehl2003svm,hastie2004entire,wang2007kernel}.  

\section{Problem Formalization}
\label{mapping}
Our framework and results are motivated by those algorithmic uses-cases in which considerations of fairness and welfare stand alongside those of efficiency. Before we formalize the correspondence between loss minimization and welfare maximization, we first provide an overview of these two distinct perspectives on using optimization to pursue social notions of fairness.

In the empirical loss minimization task, a learner seeks a classifier $h$ that issues the most accurate predictions when trained on set of $n$ data points $\{\mathbf{x}_i, z_i, y_i\}_{i=1}^n$. Each triple gives an individual's feature vector $\bx_i \in \mathcal{X}$, protected class attribute $z_i \in \{0, 1\}$,\footnote{Though individuals in a dataset will typically be coded with many protected class attributes, in this paper we will consider only a single sensitive attribute of focus.} and true label $y_i \in \{-1, +1\}$. A model that assigns an incorrect label $h(\bx_i) \neq y_i$ incurs a penalty.

The empirical risk minimizing predictor is given by ${h^*} \defeq \argmin_{h \in \mathcal{H}} \sum_{i=1}^n \ell(h(\mathbf{x_i}), y_i)$ where hypothesis $h: \mathcal{X} \rightarrow \mathbb{R}$ gives a learner's model, the loss function $\ell: \mathbb{R} \times \{-1, +1\} \rightarrow \mathbb{R}$ gives the penalty incurred by a prediction, and $\mathcal{H}$ is the hypothesis class under the learner's consideration. In this paper, we will mainly consider $\mathcal{H}$ to be the class of separators based on hyperplane boundaries $h_{\bm{\theta}}(\mathbf{x}) = {\bm{\theta}}^\intercal \mathbf{x} + b$ with $\mathbf{x}, \bm{\theta} \in \mathbb{R}^d$ and $b \in \mathbb{R}$. For binary classification, the learner issues a prediction ${h}^{ML}(\bx) = sgn({h_{\bm{\theta}}}(\mathbf{x}))$.

Notions of fairness have been formalized in a variety of ways in the machine learning literature. Though Dwork et al.'s initial conceptualization remains prominent and influential \cite{dwork2012fairness}, much recent work has defined fairness as a parity notion applied across different protected class groups \cite{hardt2016equality, chouldechova2017fair,kleinberg2017inherent,zafar2017fairness,donini2018empirical,agarwal2018reductions}. The following definition gives the general form of these types of fairness criteria. 

\begin{definition}
\label{group-fairness}
A classifier $h$ satisfies a general group-based notion of $\epsilon$-fairness if 
\begin{align}
\abs{\mathbb{E}[g(\ell, h, \bx_i, y_i)| \mathcal{E}_{\bz_i = 1} ] - \mathbb{E}[g(\ell, h, \bx_i, y_i) |\mathcal{E}_{\bz_i = 0}]}\le \epsilon
\end{align}
where $g$ is some function of classifier $h$ performance, and $\mathcal{E}_{\bz_i = 0}$ and $\mathcal{E}_{\bz_i = 1}$ are events that occur with respect to groups $z = 0$ and $z=1$ respectively.
\end{definition}

Further specifications of the function $g$ and the events $\mathcal{E}$ instantiate particular group-based fairness notions. For example, when $g(\ell, h, \bx_i, y_i) = h(\bx_i)$ and $\mathcal{E}$ refers to the events in which $y_i = 1$ for each group, Definition 1 gives an $\epsilon$-approximation of \emph{equality of opportunity} \cite{hardt2016equality}. When $g(\ell, h, \bx_i, y_i) = \ell(h(\bx_i), y_i)$ and $\mathcal{E}$ refers to all events for each group, Definition 1 gives the notion of $\epsilon$-approximation of \emph{overall error rate balance} \cite{chouldechova2017fair}. Notice that as $\epsilon$ increases, the constraint loosens, and as $\epsilon$ decreases, the fairness constraint becomes more strict.
\\

In the Planner's Problem, a Planner maximizes a social welfare functional (SWF) given as a weighted sum of individual utilities, $W= \sum_{i=1}^n w_i u_i$. An individual $i$'s contribution to society's total welfare is a product of her utility $u_i$ and her social weight $w_i \in [0,1]$ normalized so that $\sum_i^n w_i = 1$. Utility functions $u_i: \mathcal{X} \rightarrow \mathbb{R}_+$ assign positive utilities to a set of attributes or goods $\bx_i$. We suppose a utility function is everywhere continuous and differentiable with respect to its inputs. 

Since a Planner who allocates a resource $h$ impacts her recipients' utilities, she solves ${h}^{SWF}(\mathbf{x}; \bm{w}) \defeq \argmax_{\bm{h}} \sum_{i=1}^n w_iu(\bx_i, h_i)$ under a budget constraint: $\sum_{i=1}^n h_i \le B$. Since we consider cases of social planning in which a desirable good is being allocated, it is natural to suppose that $u$ is strictly monotone with respect to $h$. As is common in welfare economics, we take $u$ to be concave in $h$, so that receiving the good exhibits diminishing marginal returns. Further, we require that the social welfare functional $W$ be symmetric: $W(\bm{h}; \bx, \bm{w}) = W(\sigma(\bm{h}); \sigma(\bx), \sigma(\bm{w}))$ for all possible permutations of $\sigma(\cdot)$. This property implies that the utility functions in the Planner's problem are not individualized. In the case of binary classification, the Planner decides whether to allocate the discrete good to individual $i$ or not ($h_i \in \{0,1\}$).
 \\
 \ignore{
\lh{Worried we still haven't addressed the complaint: But machine learning problems are explicitly not concerned with maximizing social welfare! We say, ``yes that's true, but when we're talking about fairness we \textit{are} to some degree talking about social good, so seeing how the two problems relate is a natural step.'' Maybe include this idea in intro or here somewhere.}
}

\section{Correspondence between Loss Minimization and Social Welfare Maximization}
To highlight the correspondence between the machine learning and welfare economic approaches to social allocation, we first show that we can understand loss minimizing solutions to also be welfare maximizing ones, albeit under a particular instantiation of the social welfare function. Since social welfare is given as the weighted sum of individuals' utilities, it is clear that manipulating weights $\bm{w}$ significantly alters the Planner's solution. Thus just as we can compute optimal allocations under a fixed set of welfare weights, we can also begin with an optimal allocation and find welfare weights that would support them. In welfare economics, the form of $\bm{w}$ corresponds to societal preferences about what constitutes a fair distribution. For example, the commonly-called ``Rawlsian'' social welfare function named after political philosopher John Rawls, can be written as $W_{Rawls} = \min_{i} u_i$ where $u_i$ gives the utility of individual $i$. This function is equivalent to the general form $\sum_{i=1}^n w_i u_i$ where the individual $i$ with the lowest utility $u_i$ has welfare weight $w_i = 1$ and all individuals $k \neq i$ have weight $w_k = 0$. On the other hand, the commonly-called ``Benthamite'' social welfare function named after the founder of utilitarianism Jeremy Bentham, aggregates social welfare such that an extra unit of utility contributes equally to the social welfare regardless of who receives it. Benthamite weights are equal across all individuals: $w_i = \frac{1}{n}$ for all $i \in [n]$.

Thus associating an optimal (possibly fairness constrained) loss minimizing allocation with a set of welfare weights that would make it socially optimal lends insight into how socially ``fair'' a classification is from a welfare economic perspective. The following Proposition formally states this correspondence between loss minimization and social welfare maximization. 

\begin{proposition}
For any vector of classifications $h^{ML}(\bx_i)$ that solves a loss minimization task, there exists a set of welfare weights $\bm{w}$ with $\sum_{i=1}^n w_i = 1$ such that the Planner who maximizes social welfare $W$ with a budget $B$ selects an optimal allocation $h^{SWF}(\bx_i) = h^{ML}(\bx_i)$ for all $i\in [n]$.
 \end{proposition}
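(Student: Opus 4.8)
The plan is to exhibit explicit welfare weights by reducing the Planner's combinatorial allocation to a simple ranking problem. First I would fix the correspondence between the two frameworks: a positive prediction $h^{ML}(\bx_i) = +1$ is identified with allocating the good ($h_i = 1$) and a negative prediction $h^{ML}(\bx_i) = -1$ with withholding it ($h_i = 0$). Write $S = \{\, i : h^{ML}(\bx_i) = +1 \,\}$ for the set of positively labeled individuals and let $k = \abs{S}$. Since the budget is a parameter of the Planner's Problem, I would take $B = k$, so that both problems operate over allocations of the same size.

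Next I would rewrite the Planner's objective in a form that makes its optimal allocation transparent. Because $u$ is symmetric (not individualized) and strictly monotone in $h$, the marginal gain $\Delta_i \defeq u(\bx_i, 1) - u(\bx_i, 0)$ is well defined and strictly positive for every $i$. For any allocation $\bm{h} \in \{0,1\}^n$ with support $T$, the weighted welfare decomposes as $\sum_{i=1}^n w_i u(\bx_i, 0) + \sum_{i \in T} w_i \Delta_i$, where the first sum is a constant independent of the chosen allocation. Strict positivity of each $\Delta_i$ forces the budget constraint to bind, so the Planner allocates to exactly $B = k$ individuals, and the optimal support is precisely the $k$ indices with the largest values of $w_i \Delta_i$. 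The discrete Planner's Problem thus collapses to selecting the top-$k$ individuals ranked by the products $w_i \Delta_i$.

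It then suffices to choose weights that push every member of $S$ above every non-member in this ranking. I would set $w_i \propto 2/\Delta_i$ for $i \in S$ and $w_j \propto 1/\Delta_j$ for $j \notin S$, normalizing by $Z = \sum_{i\in S} 2/\Delta_i + \sum_{j \notin S} 1/\Delta_j$ so that $\sum_{i=1}^n w_i = 1$ and every $w_i \in [0,1]$. With this choice $w_i \Delta_i = 2/Z$ for each $i \in S$ and $w_j \Delta_j = 1/Z$ for each $j \notin S$; since $2/Z > 1/Z > 0$, the unique optimal top-$k$ set is exactly $S$, giving $h^{SWF}(\bx_i) = h^{ML}(\bx_i)$ for all $i \in [n]$. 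The degenerate cases $k = 0$ and $k = n$ are handled by uniform weights.

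The routine verifications---that the weights are nonnegative, sum to one, and lie in the admissible range---are immediate from the normalization, so I expect the main obstacle to be the reduction step rather than the construction: establishing that strict monotonicity makes the budget bind and that symmetry lets a single $\Delta_i$ govern each individual's contribution, so that the Planner's optimization genuinely reduces to a threshold on $w_i \Delta_i$. One should also confirm that $B$ may be taken equal to $k$; if instead $B$ is regarded as fixed in advance, the statement is best read as asserting the existence of a compatible pair $(B, \bm{w})$, which the same construction supplies.
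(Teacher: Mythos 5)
Your proof is correct, but it takes a genuinely different route from the paper's. The paper works with the divisible-good first-order conditions: it observes that at a welfare optimum the weighted marginal utilities must equalize, $w_i m_i = w_j m_j = k$ with $m_i = \frac{\partial u}{\partial h}\rvert_{\{\bx_i, h^{ML}(\bx_i)\}}$, and inverts this system (together with $\sum_i w_i = 1$) to obtain the unique supporting weights $w_i \propto 1/m_i$; the binary case is then handled by a remark that the argument carries through with the discrete difference $\Delta u(\bx_i, h) = u(\bx_i,1)-u(\bx_i,0)$ in place of the partial derivative. You instead attack the discrete problem head-on: decompose welfare as a constant plus $\sum_{i \in T} w_i \Delta_i$, argue the budget binds by strict positivity of the $\Delta_i$, reduce the Planner's problem to a top-$k$ ranking by $w_i \Delta_i$, and then build weights with a deliberate factor-two gap separating $S$ from its complement. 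The comparison is instructive in both directions. The paper's equalized-marginals construction yields the interpretable closed form $\bar{w}_i = k/m_i$ that it needs downstream (it is exactly the content of its Proposition~\ref{proposition-weights}, linking weights to inverse marginal gains), and it applies when allocations are fractional; but in the binary setting, equalizing all $w_i \Delta_i$ makes the Planner indifferent among \emph{all} size-$k$ supports, so $h^{ML}$ is only weakly supported as one optimum among many ties. Your gap construction is more elementary---no appeal to calculus optimality conditions on a discrete domain---and strictly stronger in the binary case, since it makes $S$ the \emph{unique} optimal support; you also handle the degenerate cases $k=0$, $k=n$ and the budget-compatibility issue ($B = k$, or existence of a compatible pair $(B,\bm{w})$) explicitly, points the paper leaves implicit (its constant $k$ is ``set by the Planner's budget $B$''). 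The one thing your construction gives up is the canonical inverse-marginal-utility form of the weights, which is what the paper mines for its distributive-justice interpretation.
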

 
\begin{proof}
First, we know that since $W(\bx, \bm{w})$ is a weighted sum of functions $u$, which are concave in $h$, the Planner can indeed find a social welfare maximizing allocation $\bm{h}^{SWF}$. Let $h^{ML}(\bx)$ be the empirical loss-minimizing classifier for $\{\bx_i, z_i, y_i\}_{i=1}^n$. With these allocations given, we can invert the social welfare maximization problem to find the weights that $\bm{w}$ support them.

For a given utility function $u$, we evaluate $\frac{\partial u(\bx, h)}{\partial h}\Bigr\rvert_{\{\bx_i, h^{ML}(\bx_i) \}}= m_i $ $\forall i \in [n]$, which gives the marginal gain in utility for individual $i$ from having received an infinitesimal additional allocation of $h$. Notice that at a welfare maximizing allocation $\bm{h}$, we must have that 
\begin{align} 
\label{optimality-condition}
w_i \frac{\partial u(\bx, h)}{\partial h}\Bigr\rvert_{\{\bx_i, h_i\}} = w_j \frac{\partial u(\bx, h)}{\partial h}\Bigr\rvert_{\{\bx_j, h_j \} } \text{ for all }i, j \in [n]
\end{align} 
When the allocation $h^{ML} (\bx)$ has been fixed, we must have that $w_i m_i = w_j m_j = k$, where the constant $k$ is set by the Planner's budget $B$, for all $i, j$ along with $\sum_{i=1}^n w_i = 1$. Since $u$ is strictly monotone with respect to $h$, $m_i > 0$ for all $i$. We thus have a non-degenerate system of $n$ equations with $n$ variables, and there exists a unique solution of welfare weights $\bm{w}$ that support the allocation. 
\end{proof}

Note that in the case of binary classification $h^{ML}(\bx) \in \{-1, +1,\}$, so allocations are not awarded at a fractional level. Thus rather than the partial $\frac{\partial u(\bx, h)}{\partial h}$, the Planner must consider the margin gain of receiving a positive classification. Nevertheless, Proposition 1 still holds, and the proof carries through with $\Delta u(\bx, h(\bx)) = u(\bx, 1) - u(\bx, 0)$ in place of partial derivatives $\frac{\partial u(\bx, h)}{\partial h}$. 

The equations given in (\ref{optimality-condition}) set an optimality condition for the Planner. Its structure, though simple, reveals that welfare weights must be inversely proportional to an individuals' marginal utility gain from receiving an allocation. This result is formalized in the Proposition below.

\begin{proposition}
\label{proposition-weights}
For any set of optimal allocations $\bm{h} = \argmax_{\bm{h}}\sum_{i=1}^n \bar{w_i} u(\bx_i, h_i)$ with strictly monotonic utility function $u$ concave in $h$, the supporting welfare weights have the form $\bar{w}_i = \frac{k}{m_i}$ where $m_i = \frac{\partial u(\bx_i)}{\partial h}\rvert_{\{\bx_i, h_i \}}$ and $k>0$ is a constant set by the Planner's budget $B = \sum_{i=1}^n h_i$. 
\end{proposition}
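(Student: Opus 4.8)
The plan is to obtain this as an essentially immediate corollary of the optimality condition~(\ref{optimality-condition}) already established in the proof of Proposition 1, while making the role of the budget explicit. The Planner solves $\argmax_{\bm{h}} \sum_{i=1}^n \bar{w}_i u(\bx_i, h_i)$ subject to $\sum_{i=1}^n h_i \le B$. Because each $u$ is concave in $h$ and the weights $\bar{w}_i$ are nonnegative, the objective is a nonnegatively-weighted sum of concave functions and hence concave, while the feasible set is a half-space; the program is therefore a concave maximization over a convex set, so the first-order (KKT) conditions are both necessary and sufficient for optimality. My first step would be to argue that the budget binds: since $u$ is strictly monotone in $h$, raising any $h_i$ strictly increases the objective, so at an optimum $\sum_{i=1}^n h_i = B$ and the budget constraint is active. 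This is exactly the reason the statement writes $B = \sum_{i=1}^n h_i$ with equality.

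Next I would form the Lagrangian
\begin{equation}
L(\bm{h}, \lambda) = \sum_{i=1}^n \bar{w}_i \, u(\bx_i, h_i) - \lambda\left(\sum_{i=1}^n h_i - B\right)
\end{equation}
and impose stationarity in each coordinate $h_i$. This yields $\bar{w}_i\, m_i - \lambda = 0$, where $m_i = \frac{\partial u(\bx_i)}{\partial h}\rvert_{\{\bx_i, h_i\}}$, i.e. $\bar{w}_i m_i = \lambda$ for every $i$—precisely the equalized weighted-marginal-utility condition of~(\ref{optimality-condition}). Setting $k \defeq \lambda$ and rearranging gives $\bar{w}_i = \frac{k}{m_i}$, the claimed form.

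It then remains to verify $k > 0$ and to pin down its value. Strict monotonicity gives $m_i > 0$ for all $i$, and the welfare weights satisfy $\bar{w}_i > 0$, so $k = \bar{w}_i m_i > 0$. The constant $k$ is the shadow price of the budget constraint, hence it is ``set by the budget'' in the sense that it is the marginal social value of relaxing $B$; equivalently, working in the inverse direction where the optimal allocation $\bm{h}$ (and thus the $m_i$) is given, imposing the normalization $\sum_{i=1}^n \bar{w}_i = 1$ fixes $k = \left(\sum_{i=1}^n 1/m_i\right)^{-1}$. Either route determines $k$ uniquely and is consistent with the uniqueness already proved in Proposition 1.

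Because the proposition is fundamentally a restatement of~(\ref{optimality-condition}), there is no deep obstacle; the only genuine care—and it is mild—concerns the discrete binary-classification case, where $h_i \in \{0,1\}$ and $\partial u/\partial h$ is not literally defined. Following the remark after Proposition 1, I would replace $m_i$ by the discrete marginal gain $\Delta u(\bx_i) = u(\bx_i,1) - u(\bx_i,0)$ and re-run the argument on the continuous relaxation of the resulting knapsack-type budget problem, whose optimum ranks individuals by $\bar{w}_i \Delta u(\bx_i)$. The same equalization-at-the-margin logic then delivers $\bar{w}_i \propto 1/\Delta u(\bx_i)$, with the understanding that the first-order condition now expresses a binding margin rather than an interior stationary point.
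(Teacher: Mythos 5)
Your proposal is correct and follows essentially the same route as the paper: the paper derives Proposition~\ref{proposition-weights} directly from the equalized weighted-marginal-utility condition~(\ref{optimality-condition}) established in Proposition~1, with $k$ playing exactly the role of the budget-determined constant (shadow price) that you make explicit via the Lagrangian. Your added details---the binding budget from strict monotonicity, the normalization pinning $k = \bigl(\sum_{i=1}^n 1/m_i\bigr)^{-1}$, and the discrete substitution $\Delta u(\bx_i) = u(\bx_i,1)-u(\bx_i,0)$---all match remarks the paper itself makes around Proposition~1.
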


By associating a set of classification outcomes with a set of implied welfare weights, one can inquire about the social fairness of the allocation scheme by investigating the distribution of welfare weights across individuals or across groups. While there may not be a single distribution of welfare weights that can be said to be ``most fair,'' theoretical and empirical work in economics has been conducted on the range of fair distributions of societal weights \cite{fleurbaey2011theory,saez2016generalized}. This research has considered weights as implied by current social policies \cite{ackert2007social,zoutman2013optimal,christiansen1978implicit}, philosophical notions of justice \cite{adler2012well,fleurbaey2015optimal}, and individuals' preferences in surveys and experiments \cite{ackert2007social,kuziemko2015elastic,saez2016generalized}. They thus offer substantive notions of fairness currently uncaptured by many current algorithmic fairness approaches. 
 
\section{An Algorithm that records all Possible Labelings}
In the previous section, we showed that for any vector of classifications, one can compute the implied societal welfare weights of the generic SWF that would yield the same allocations in the Planner's Problem. In this section, we work in the converse direction: Beginning with a Planner's social welfare maximization problem, does there exist a classifier $h^{ML} \in \mathcal{H}$ that generates the same classification as the Planner's optimal allocation such that for all $i\in [n]$, $h^{ML}(\bx_i) = h^{SWF}(\bx_i)$?

We answer this question for the hypothesis class of linear decision boundary-based classifiers by providing an algorithm that accomplishes a much more general task: Given a set $\mathcal{X}$, containing $n$ $d$-dimensional nondegenerate data points $\bx \in \mathbb{R}^d $, our algorithm enumerates all linearly separable labelings and can output a hyperplane parameterized by $\bm{\theta} \in \mathbb{R}^d$ and $b\in \mathbb{R}$ that achieves that set of labels. In order to build intuition for its construction, we first consider a hyperplane separation technique that applies to a very specific case: a case in which a hyperplane separates sets $A$ and $B$, intersecting $A$ at a single point and intersecting $B$ at $d-1$ points. 	

\begin{lemma}
\label{lemma-convex}
Consider linearly separable sets $A$ and $B$ of points $\bx \in \mathbb{R}^d$. For any $d-1$-dimensional hyperplane $h_V$ with $h_V \cap A = \bv$ and $h_V \cap B = P$ where $\abs{P} = d-1$ that separates $A$ and $B$ into closed halfspaces $\bar{h}^+_V$ and $\bar{h}^-_V$, one can construct a $d-1$-dimensional hyperplane $h$ that separates $A$ and $B$ into open halfspaces ${h}^+$ and ${h}^-$.
\end{lemma}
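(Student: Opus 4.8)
The plan is to obtain $h$ from $h_V$ by a two-stage perturbation: a small \emph{rotation} about the affine span of $P$ that lifts the single contact point $\bv \in A$ strictly off the plane, followed by a small \emph{translation} that lifts the $d-1$ contact points of $P \subseteq B$ strictly off the plane. The guiding idea is to choose each perturbation small enough that every point already strictly separated by $h_V$ stays strictly separated. Write $h_V = \{\bx : \bm{\theta}^\intercal \bx = c\}$ with $A \subseteq \bar{h}^+_V = \{\bm{\theta}^\intercal \bx \ge c\}$ and $B \subseteq \bar{h}^-_V = \{\bm{\theta}^\intercal \bx \le c\}$. By hypothesis $\bv$ is the only point of $A$ lying on $h_V$, so every other point of $A$ satisfies $\bm{\theta}^\intercal \bx > c$; likewise $P$ are the only points of $B$ on $h_V$, and every other point of $B$ satisfies $\bm{\theta}^\intercal \bx < c$. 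As $A,B$ are finite, these strict inequalities hold with a uniform positive margin.

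For stage one, the $d-1$ points of $P$ affinely span a $(d-2)$-flat $\mathrm{aff}(P) \subseteq h_V$, and by nondegeneracy $\bv \notin \mathrm{aff}(P)$, so the $d$ points $\{\bv\}\cup P$ are affinely independent and determine $h_V$ uniquely. I would consider the one-parameter pencil of hyperplanes containing $\mathrm{aff}(P)$, parameterized by a rotation angle $t$ with $h_V$ at $t=0$; every hyperplane in this pencil keeps $P$ on it. The signed distance from $\bv$ to the pencil hyperplane is a continuous function of $t$ that vanishes at $t=0$ with nonzero derivative, since $\bv$ lies at positive distance from the axis $\mathrm{aff}(P)$. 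Hence for small $t$ of the appropriate sign, $\bv$ moves strictly into the positive halfspace, and by continuity together with the uniform margins, for all sufficiently small such $t$ the remaining points of $A$ stay strictly positive and the points of $B\setminus P$ stay strictly negative. Fix such a $t$, yielding $h' = \{\bm{\theta}'^\intercal \bx = c'\}$ on which only $P$ lies, with all of $A$ strictly positive and all of $B\setminus P$ strictly negative.

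For stage two, let $\delta = \min_{\bx\in A}(\bm{\theta}'^\intercal\bx - c') > 0$ and set $h = \{\bm{\theta}'^\intercal \bx = c' + \eta\}$ for any $0 < \eta < \delta$. Then every point of $A$ satisfies $\bm{\theta}'^\intercal \bx \ge c' + \delta > c' + \eta$, while every point of $B$ satisfies $\bm{\theta}'^\intercal \bx \le c' < c' + \eta$ (the points of $P$, previously at equality, are now strictly below the threshold). Thus $A \subseteq h^+$ and $B \subseteq h^-$ with $h^+, h^-$ the open halfspaces of $h$, which is the desired strict separator.

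The main obstacle is the rotation step: one must verify that the pencil about $\mathrm{aff}(P)$ genuinely moves $\bv$ off the plane — i.e.\ that the signed distance of $\bv$ has nonvanishing derivative in $t$, which relies on $\bv \notin \mathrm{aff}(P)$ and hence on the nondegeneracy of the data — and that a single sufficiently small angle simultaneously preserves all the finitely many strict separations; the subsequent translation is then routine. It is worth noting why a pure translation cannot suffice on its own: translating $h_V$ toward $A$ clears $P$ but pushes $\bv$ onto the wrong side, while translating toward $B$ does the reverse. The rotation is precisely the ingredient that decouples the single $A$-contact from the $B$-contacts, after which one clean translation finishes the argument.
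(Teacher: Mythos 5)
Your proof is correct and follows essentially the same route as the paper's: a pivot (rotation about the $d-2$-dimensional flat spanned by $P$) that frees the single contact point $\bv$, followed by a small translation back toward $\bv$ that frees $P$, exactly the paper's pivot-and-translate maneuver. You merely make explicit the quantitative details (uniform margins over the finitely many strictly separated points, the nonvanishing derivative of the signed distance of $\bv$ under the pencil of hyperplanes through $\mathrm{aff}(P)$) that the paper compresses into the word ``infinitesimal.''
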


Because its techniques are not of primary relevance for this Section, we defer the full proof of this Lemma to the Appendix but provide a brief exposition. The construction on which the Lemma relies is a ``pivot-and-translate'' maneuver. A hyperplane as described can separate points in open halfspaces by first pivoting (infinitesimally) on a $d-2$-dimensional facet $P$ of a convex hull $C(B)$ away from $\bv \in C(A)$ and then translating (infinitesimally) back toward $\bv$ and away from $C(B)$. We show that all separable convex sets can be separated by such a hyperplane and procedure.

Note that since we seek enumerations of all labelings achievable by a linear separator on a given dataset, we are not \textit{a priori} given convex hulls to separate. That is, we want to know which points \textit{can} be made into distinct convex hulls and which cannot. Thus we take the preceding procedure and invert it---the central idea is to begin with the separators and from there, search for all possible convex hulls: Beginning with an arbitrary $d-1$-dimensional hyperplane $h$ defined by $d$ data points, we construct convex hulls out of the points in each halfspace created by $h$. Then we can use the pivot-and-translate procedure to construct a separation of the two sets into two open halfspaces. We must show that such a procedure is indeed exhaustive.

\begin{algorithm}[t]
	\SetAlgoNoLine
	\KwIn{Set $\mathcal{X}$ of $n$ data points $\bx \in \mathbb{R}^d$}
	\KwOut{All possible partitions $A$, $B$ attainable via linear separators; supporting hyperplane $h$}
	\For{all $V \subset \mathcal{X}$ with $\abs{V} = d$ }{
		Construct $d-1$-dimensional hyperplane $h_V$ defined by $\bv \in V$\;
		\For{each point $\bv \in V$
		}{
			$P = V \setminus \bv$\;
			$h = pivot(h_V, P, \bv)$ \tcp*{$h_V$ pivots around the $d-2$-dimensional plane $P$ away from $\bv$}
			$h = translate(h, \bv)$ \tcp*{$h$ translates toward $\bv$}
			Record $A = \{\bx | \bx \in h^+\} , B = \{\bx | \bx \in h^- \}, h;$
		}
	}
	\caption{Record all possible labelings on a dataset $\mathcal{X}$ by linear separators}
	\label{full-alg1}
\end{algorithm}

\begin{theorem}
Given a dataset $\mathcal{X}$ consisting of $n$ nondegenerate points $\bx \in \mathbb{R}^d$, Algorithm~\ref{full-alg1} enumerates all possible labelings achievable by a $d-1$-dimensional hyperplane in $O(n^dd)$ time and outputs hyperplane parameters $(\bm{\theta}, b)$ that achieve each one. 
\end{theorem}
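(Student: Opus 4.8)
The plan is to prove correctness in two directions -- soundness and completeness -- and then bound the running time. For soundness (every partition the algorithm records is genuinely attainable by a linear separator) I would invoke Lemma~\ref{lemma-convex} directly. For each subset $V$ and each chosen point $\bv \in V$, the hyperplane $h_V$ passes through $\bv$ and through the $d-1$ points of $P = V \setminus \bv$, so the hypotheses of the Lemma are met with $A$ and $B$ the two sides it separates. The pivot-and-translate maneuver therefore yields an honest hyperplane $h$ whose open halfspaces $h^+, h^-$ strictly contain the recorded sets, and the output pair $(\bm{\theta}, b)$ is an explicit witness that the recorded labeling is achievable. Hence every labeling the algorithm writes down is realizable.

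The crux is completeness: every labeling achievable by some $(d-1)$-dimensional hyperplane is recorded. Here I would argue in the parameter space of hyperplanes. Fix any achievable labeling $(A^*, B^*)$; the set of parameters $(\bm{\theta}, b)$ strictly separating $A^*$ from $B^*$ is a nonempty open polyhedron $\Pi$ cut out by the strict inequalities $\bm{\theta}^\intercal \bx_i + b > 0$ for $\bx_i \in A^*$ and $\bm{\theta}^\intercal \bx_i + b < 0$ for $\bx_i \in B^*$. I would first translate a separator toward $A^*$ until exactly one point $\bv \in A^*$ becomes active on it; nondegeneracy guarantees a single point is hit first. Then, holding $\bv$ fixed on the hyperplane, I rotate about $\bv$ toward the convex hull $C(B^*)$ until the hyperplane is tangent to $C(B^*)$ along a facet. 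Because $\bv \notin C(B^*)$ and $\mathcal{X}$ is in general position, tangency occurs at exactly $d-1$ points $P \subseteq B^*$, with every remaining point still strictly on its correct side. The resulting hyperplane is precisely $h_V$ for $V = \{\bv\} \cup P$, so the algorithm examines this $(V, \bv)$ pair, and applying the Lemma's pivot-and-translate -- the inverse of the tightening I just carried out -- sends $\bv$ into $h^+$ and $P$ into $h^-$ while leaving the strictly separated points untouched, thereby recording exactly $(A^*, B^*)$. Translating toward $B^*$ instead captures the complementary ordered labelings, and the trivial all-one-class labelings are recorded as easy boundary cases.

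The main obstacle I anticipate is exactly this completeness step: justifying that one can always reach a tight configuration with precisely one point of one class and precisely $d-1$ of the other on the hyperplane, and no more. This is where the nondegeneracy hypothesis does the essential work, ruling out $d+1$ data points becoming coplanar and forcing the relevant vertices to be simple. I would make it rigorous by a dimension count: a hyperplane constrained to pass through the fixed point $\bv$ has $d-1$ residual degrees of freedom, so a vertex of the feasible slice activates exactly $d-1$ further constraints, and general position forces these to be $d-1$ distinct points, all lying on the $B^*$ side because the rotation was directed toward $C(B^*)$ and therefore away from $A^*$.

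Finally, for the running time I would simply count iterations. The outer loop ranges over all $\binom{n}{d} = O(n^d)$ subsets of size $d$, and for each subset the inner loop runs $d$ times; constructing $h_V$ and performing one pivot, one translate, and one record are $O(1)$ geometric primitives in the fixed-dimension model. Hence the total work is $O(n^d \cdot d) = O(n^d d)$. I would note that a given labeling may be produced by several distinct $(V, \bv)$ pairs, but such repetition only wastes effort within the stated bound and does not affect enumeration of the full achievable set; the classical count of $\Theta(n^d)$ separable dichotomies of points in general position confirms that the output is of the correct order of magnitude.
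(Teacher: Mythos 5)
Your soundness direction and the runtime count are fine (soundness is in fact immediate: whatever $h$ the algorithm outputs, the recorded pair $A=\{\bx\in h^+\}$, $B=\{\bx\in h^-\}$ is separated by $h$ itself), but the completeness step has a genuine gap exactly where you predicted trouble. You claim that after fixing $\bv\in A^*$ on the hyperplane, rotating about $\bv$ toward $C(B^*)$ produces tangency at exactly $d-1$ points $P\subseteq B^*$, ``all lying on the $B^*$ side because the rotation was directed toward $C(B^*)$ and therefore away from $A^*$.'' This is false in general: a rotation about $\bv$ moves the hyperplane toward $C(B^*)$ on one side of $\bv$ but \emph{toward} points of $A^*$ on the other side (already visible in $\mathbb{R}^2$: pivoting a line clockwise about $\bv$ sweeps down on the right and up on the left, so a point of $A^*$ can become active before any point of $B^*$). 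Your dimension count does correctly show that at an extreme point of the cone of separating hyperplanes through $\bv$, nondegeneracy forces $d-1$ further active constraints at distinct data points---but it says nothing about the \emph{class split} of those active points, and the split is the entire crux. If the tight hyperplane passes through, say, two points of $A^*$ and $d-2$ of $B^*$, no single pivot-and-translate from Algorithm~\ref{full-alg1} recovers $(A^*,B^*)$: pivoting away from one free vertex $\bv\in A^*$ strands the other $A^*$-point of $V$ in $h^-$, so that $(V,\bv)$ pair records a different labeling. Thus Lemma~\ref{lemma-convex} cannot be invoked until you have established the existence of a supporting hyperplane with the exact $1+(d-1)$ split, which your argument assumes rather than proves.

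This missing step is precisely what the paper's proof supplies with its gift-wrapping induction: starting from any separating hyperplane through $d$ hull vertices partitioned as $V_1=D_1\amalg E_1$ with $\abs{D_1},\abs{E_1}\ge 2$, it pivots around the $(d-2)$-dimensional ridge $P_1$ away from one $A$-vertex, picks up a new vertex $\bv_E$ of $C(B)$ so that the hyperplane continues to separate all remaining points, and thereby reduces $\abs{D}$ by one per round; after $\abs{D_1}-1$ rounds the hyperplane meets $C(A)$ in a single vertex and the algorithm's pivot-and-translate applies. To repair your proof you would need an analogous mechanism for trading active $A^*$-constraints for $B^*$-constraints while remaining in the (closed) feasible cone---a sequence of directed pivots, not the single rotation you describe. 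One further small point: recording $A$ and $B$ requires evaluating $h$ on all $n$ points, so calling one record an $O(1)$ primitive is not right ($O(nd)$ per record is the honest cost); since the paper's stated $O(n^d d)$ bound glosses over the same accounting, this is a shared blemish rather than a defect unique to your write-up.
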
 

\begin{proof}
We have already shown that the pivot-and-translate construction is sufficient to linearly separate two sets $A$ and $B$ in the very specific case given in Lemma 1. But we must prove that all linearly separable sets can be constructed via Algorithm 1. We prove it is exhaustive by contradiction. 

Suppose there exists a separation of $\mathcal{X}$ that is not captured by Algorithm 1. Then there exists disjoint sets $A$ and $B$ such that their convex hulls $C(A)$ and $C(B)$ do not intersect. By the hyperplane separating theorem, there exists a $d-1$-dimensional hyperplane $h_{V_1}$ that separates $A$ and $B$, defined by a set $V_1$ of $d$ vertices $\bv$, at least one of which is on the boundary of each convex hull. Without loss of generality, we assume that for all $\bx \in A$, $\bx \in h_{V_1}^+$ and for all $\bx \in B$, $\bx \in h_{V_1}^-$. Notice that this hyperplane is indeed ``checked'' by the Algorithm, and this hyperplane $h_{V_1}$ correctly separates $\bx \in \mathcal{X} \setminus V_1$ into the two sets $A$ and $B$. Thus if the separation is not disclosed via the procedure, the omission must occur due to the pivot-and-translate procedure's being incomplete. 

In Algorithm 1, the set $V_1$ is partitioned so that $V_1 = \bv_{f,1} \cup P_1$ where $\bv_{f,1}$ is the ``free vertex'' and $P_1$ is the pivot set consisting of $d-1$ vertices. This partition occurs $d$ times so that each vertex $\bv \in V_1$ has its turn as the ``free vertex.'' Thus we can view the pivot-and-translate procedure as constituting a second partition---a partition of the $d$ vertices that define the initial separating hyperplane. By contradiction, we claim that there exists a partition $D_1, E_1 \subset V_1$ such that $D_1 \coprod E_1 = V_1$ where $D_1 \subset A$ and $E_1 \subset B$ that is unaccounted for in the $d$ pivot-and-translate operations applied to $h_{V_1}$. Thus $\abs{D_1}, \abs{E_1} \ge 2$. We use a ``gift-wrapping'' argument, a technique common in algorithms that construct convex hulls, to show that the partition $A$ and $B$ is indeed covered by Algorithm 1.

Select $\bv \in D_1$ to be the free vertex $\bv_{f,1}$, and let the pivot set $P_1 = V_1 \setminus \bv_{f,1}$. We pivot around $P_1$ and away from $\bv_{f,1}$ so that $\bv_{f,1} \in h_{V_1}^+$. Rotations in $d$-dimensions are precisely defined as being around $d-2$-dimensional planes. Thus pivoting around the ridge $P_1$ away from $\bv_{f,1}$ is a well-defined rotation in $\mathbb{R}^d$. Since $h_{V_1}$ is a supporting hyperplane to $C(B)$, $E_1$ constitutes a $|E_1|-1$-dimensional facet of $C(B)$. There exists a vertex $\bv_E \in C(B)$ such that $E_1 \cup \bv_E$ gives a $\abs{E_1}$-dimensional facet of $C(B)$. Let $h_{V_2}$ be defined by the set $V_2 = P_1 \cup \bv_E$. $h_{V_2}$ continues to correctly separate all $\bx \in \mathcal{X} \setminus V_2$. 

We once again partition $V_2$ into sets $D_2$ and $E_2$ whose members must be ultimately classified in sets $A$ and $B$ respectively. Notice that $\abs{D_2} = \abs{D_1} -1$, since $h_{V_2}$ correctly classifies $\bv_{f,1}$ as belonging to set $A$. Thus with each iteration of the pivot procedure, the separating classifier unhinges from a vertex in $C(A)$ and ``wraps'' around $C(B)$ just as in the gift wrapping algorithm to attach onto another vertex in $C(B)$. At each step, the hyperplane defined by $d$ vertices continues to support and separate $C(A)$ and $C(B)$. Thus process iterates until in the $\abs{D_1}-1$-th round, the hyperplane $h_{V_{\abs{D_1}-1}}$ has partition $D_{\abs{D_1}-1}$ and $E_{\abs{D_1}-1}$ with $\abs{D_{\abs{D_1}-1}}=1$. Applying the full pivot-and-translate procedure ensures the desired separation of sets $A$ and $B$ into open halfspaces.

Thus starting from a separable hyperplane defined by $d$ vertices on the convex hulls $C(A)$ and $C(B)$, which must exist in virtue of the separability of sets $A$ and $B$, we were able to use the pivot procedure in order to ``gift-wrap'' around one convex hull until we arrived at a $d$-dimensional separating hyperplane with only one vertex $\bv_f \in C(A)$. This hyperplane is obviously checked by the first for-loop of Algorithm 1. The subsequent for-loop that performs the second partition of the $d$ vertices into the free vector $\bv_f$ and the pivot set $P$ then directly applies and performs the pivot-and-translate procedure given in Algorithm 1 to achieve the desired separation. 
\end{proof}

Degeneracies in the dataset can be handled by combining Algorithm 1 with standard solutions to degeneracy problems in geometric algorithms, which perform slight perturbations to degenerate data points to transform them into nondegenerate ones \cite{edelsbrunner1990simulation}. In concert with these solutions, Algorithm 1 automatically reveals which social welfare maximization solutions are attainable on a given dataset $\mathcal{X}$ via hyperplane-based classification and the $0-1$ accuracy loss each entails. 

\section{Sensitivity Analysis of Fairness Constraints}
In this Section, we perform welfare-minded sensitivity analyses on the standard empirical risk minimization (ERM) program with fairness constraints. Assuming, as before, that an individual benefits from receiving a positive classification, we define group utilities as 
\begin{align}
W_0= \frac{1}{n_{0}}\sum_{i | z_i = 0}\frac{h(\bx_i)+1}{2}, \qquad W_1= \frac{1}{n_{1}}\sum_{i | z_i = 1} \frac{h(\bx_i)+1}{2}
\end{align}
where $n_0$ and $n_1$ give the number of individuals in groups $z=0$ and $z=1$ respectively. 

First, we present an instantiation of the $\epsilon$-fair ERM problem with a fairness constraint proposed in prior work in algorithmic fairness. We work from a Soft-Margin SVM program and derive the various dual formulations that will be of use in the following analyses. In Section 5.2, we move on to show how $\Delta \epsilon$ perturbations to the fairness constraint yield changes in classification outcomes for individuals and by extension, how they impact a group's overall welfare. Standard sensitivity analyses show how the objective value changes as constraints are tightened and loosened, but they are unable to show how classifications themselves are affected by a changing constraint. Our approach, which draws a connection between fairness perturbations and searches for an optimal SVM regularization parameter, tracks changes in an individual's classification by taking advantage of codependence of variables in the Dual of the SVM. By perturbing the fairness constraint, we observe changes in not its own corresponding Dual variable but in the corresponding Dual of the margin constraints, which relay the classification fates of data points. Via this technique, we plot the full ``solution paths" of the Dual variable as a function of $\epsilon$ and as a result, we compute group welfares as a function of $\epsilon$. We close this Section by working from the shadow price of the fairness constraint to derive local and global sensitivities of the optimal solution to $\Delta \epsilon$ perturbations.

Our results show that tightening a fairness constraint leads to idiosyncratic changes to individuals' classification fates. We show that requiring a classifier to abide by a stricter fairness standard does not necessarily lead to improved outcomes for the disadvantaged group. Our results indicate that preferring a classifier that emits narrower parity disparities can lead to choosing outcomes that are actually Pareto dominated by seemingly ``less fair'' alternatives. In these cases, the machine learning goal of ensuring group-based fairness is incompatible with the Pareto Principle.

\begin{definition}[Pareto Principle]
\label{Pareto}
Let $x$, $y$ be two social alternatives. Let $\succeq_1, ..., \succeq_n$ be the preference ordering of individuals $i \in [n]$ and let $\succeq_P$ be the preference ordering of a Planner who maximizes social welfare. A Planner abides by the \textit{Pareto Principle} if $x \succeq_P y$ whenever $x \succeq_i y$ for all $i$.  
\end{definition}

In welfare economics, the Pareto Principle is a standard requirement of social welfare functionals---it would appear that the intentional implementation of an allocation that is Pareto dominated by an available alternative would be undesirable and even irresponsible! Nevertheless we show that in many cases, applying fairness criteria to loss minimization tasks do just that. For the sake of clarity in exposition and greater continuity with previous literature, we conduct our analysis with respect to the Soft-Margin SVM optimization problem, however the analyses and results in this Section can be applied to fairness-constrained convex loss minimization programs more generally.

\subsection{Setting up the $\epsilon$-fair ERM program}


The general fairness-constrained empirical loss minimization program can be written as 
\begin{mini}[s]
  {h\in \mathcal{H}}{\ell(h(\bx), y) }{}{}
\addConstraint{f_h(\bx, y)}{ \leq \epsilon}
\end{mini}
where $\ell(h(\bx), y)$ gives the empirical loss of a classifier $h\in \mathcal{H}$ on the dataset $\mathcal{X}$. To maximize accuracy, the learner ought to minimize 0-1 loss; however because the function ($\ell_{0-1}$)  is non-convex, it is difficult to minimize. A convex surrogate loss such as hinge loss ($\ell_h$) or log loss ($\ell_{\log}$) is frequently substituted in its place to ensure that globally optimal solutions may be efficiently found. $f_{h} (\bx, y) \le \epsilon$ gives a group-based fairness constraint of the type given in Definition \ref{group-fairness}. $\epsilon > 0$ is the unfairness ``tolerance parameter''---a greater $\epsilon$ permits greater group disparity on a metric of interest; a smaller $\epsilon$ more tightly restricts the level of permissible disparity.

We examine the behavior of fairness-constrained linear SVM classifiers. In particular, our learner minimizes hinge loss with $L_1$ regularization; equivalently she, seeks a Soft-Margin SVM that is ``$\epsilon$-fair.'' The fair empirical risk minimization program that will be of central interest is given as
\begin{mini*}
  {\bm{\theta}, b}{ \frac{1}{2}\norm{\bm{\theta}}^2 + C\sum_{i=1}^n \xi_i }{}{} 
  \label{original-fair-ERM}
  \addConstraint{y_i(\bm{\theta}^\intercal \bx_i + b) - 1 +\xi_i}{\geq 0} \tag{$\epsilon$-fair Soft-SVM}
\addConstraint{\xi_i}{ \geq 0}
\addConstraint{f_{\bm{\theta}, b}(\bx, y)}{ \leq \epsilon}
\end{mini*}
where the learner seeks linear hyperplane parameters $\bm{\theta}, b$; $\xi_i$ are non-negative slack variables that violate the margin constraint in the Hard-Margin SVM problem $y_i (\bm{\theta}^\intercal \bx_i + b) - 1 \ge 0$, and $C > 0$ is a hyperparameter tunable by the learner to optimize the trade-off between preferring a larger margin and penalizing violations of the margin. 

The abundant literature on algorithmic fairness presents a long menu of options for the various forms that $f_{\bm{\theta}}$ could take, but generally speaking, the constraints are non-convex and and thus require other methods of training classifiers that deviate from directly pursuing efficient fairness constraint-based convex programming methods \cite{kamishima2011fairness,bechavod2017learning,kearns2017preventing,agarwal2018reductions,zafar2017fairness}. In response, researchers have devised convex proxy alternatives, which have been shown to approximate the results of the original fairness constraints well \cite{zafar2015fairness,donini2018empirical,woodworth2017learning}. Since we will use the well-tread machinery of convex optimization, we primarily work with these convex fairness constraints. In particular, we will work with the proxy constraint proposed by Zafar et al. \cite{zafar2015fairness} which disallows disparities in covariance between group membership and the (signed) distance between individuals' feature vectors and the hyperplane decision boundary that exceed $\epsilon$. The fairness constraint is written as
\begin{align}
\label{covariance}
 f_{\bm{\theta}, b} =\abs{\frac{1}{n}\sum_{i=1}^n (z_i -\bar{z})(\bm{\theta}^\intercal\bx_i+ b)} \leq \epsilon 
 \end{align}

\ignore{
\begin{mini*}
  {\bm{\theta}, b}{ \frac{1}{2}\norm{\bm{\theta}}^2 + C\sum_{i=1}^n \xi_i }{}{}
  \label{fair-SVM}
  \addConstraint{y_i(\bm{\theta}^\intercal \bx_i + b) - 1 +\xi}{\geq 0} \tag{$\epsilon$-fair-SVM1}
\addConstraint{\abs{\frac{1}{n}\sum_{i=1}^n (z_i -\bar{z})(\bm{\theta}^\intercal\bx_i+ b)}}{ \leq \epsilon}
\end{mini*}
}
\noindent where $\bar{z}$ reflects the bias in the demographic makeup of $\mathcal{X}$: $\bar{z} = \frac{1}{n} \sum_{i=1}^n z_i$. Let ($\epsilon$-fair-SVM1-P) be the Soft-Margin SVM program with this covariance constraint. The corresponding Lagrangian is
\begin{align}
\label{lagrangian-primal}
\mathcal{L}_P(\bm{\theta},b, \bm{\xi},\bm{\lambda}, \bm{\mu},\gamma_1, \gamma_2)  &= \frac{1}{2} \norm{\bm{\theta}}^2 + C\sum_{i=1}^n \xi_i- \sum_{i=1}^n \lambda_i - \sum_{i=1}^n \mu_i (y_i (\bm{\theta}^\intercal \bx_i + b)- 1 +{\xi_i}) \tag{$\epsilon$-fair-SVM1-L}
\\
&- \gamma_1\big(\epsilon - \frac{1}{n}\sum_{i=1}^n(z_i - \bar{z})(\bm{\theta}^\intercal \bx_i+b)\big) - \gamma_2\big(\epsilon - \frac{1}{n}\sum_{i=1}^n(\bar{z} - z_i)(\bm{\theta}^\intercal \bx_i + b)\big) \nonumber
\end{align}
where $\bm{\theta} \in \mathbb{R}^d, b\in \mathbb{R}, \bm{\xi} \in \mathbb{R}^n$ are Primal variables. The (non-negative) Lagrange multipliers $\bm{\lambda},\bm{\mu} \in \mathbb{R}^n$ correspond to the $n$ non-negativity constraints $\xi_i \ge 0$ and the margin-slack constraints $y_i(\bm{\theta}^\intercal \bx_i + b) - 1 +\xi_i \ge 0$ respectively. The multipliers $\gamma_1, \gamma_2 \in \mathbb{R}$ correspond to the two linearized forms of the absolute value fairness constraint. By complementary slackness, dual variables reveal information about the satisfaction or violation of their corresponding constraints. The sensitivity analyses in the subsequent two subsections will focus on these interpretations.

By the Karush-Kuhn-Tucker conditions, at the solution of the convex program, the gradients of $\mathcal{L}$ with respect to $\bm{\theta}$, $b$, and ${\xi_i}$ are zero. Plugging in these conditions, the Dual Lagrangian is
\ignore{
\begin{align*} \frac{\partial \mathcal{L}}{\partial \bm{\theta}} &\coloneqq 0 \Rightarrow \bm{\theta} = \sum_{i=1}^n \mu_i y_i \bx_i - \frac{\gamma}{n}(\sum_{i=1}^n (z_i - \bar{z}) \bx_i)
\\
\frac{\partial \mathcal{L}}{\partial b} &\coloneqq 0 \Rightarrow \sum_{i=1}^n \mu_i y_i =  \frac{\gamma}{n} \sum_{i=1}^n (z_i - \bar{z}) = 0
\\
\frac{\partial \mathcal{L}}{\partial \xi_i} &\coloneqq 0 \Rightarrow \lambda_i  + \mu_i = C, \qquad  i=1, \ldots ,n \end{align*}
}
\begin{align}
\label{lagrangian-dual}
\mathcal{L}_D(\bm{\theta},\bm{\xi},\bm{\lambda}, \bm{\mu},\gamma_1, \gamma_2)  &= -\frac{1}{2} \norm{\sum_{i=1}^n \mu_i y_i \bx_i - \frac{\gamma}{n}\sum_{i=1}^n (z_i - \bar{z}) \bx_i}^2 +\sum_{i=1}^n \mu_i - \abs{\gamma}\epsilon
\end{align}
where $\gamma = \gamma_1 - \gamma_2$. Thus the Dual maximizes this objective subject to the constraints $\mu_i \in [0,C]$ for all $i$ and $\sum_{i=1}\mu_i y_i = 0$. We thus derive the full Dual problem 
\begin{maxi*}
  {\bm{\mu}, \gamma}{-\frac{1}{2}\norm{\sum_{i=1}^n \mu_i y_i \bx_i - \frac{\gamma}{n}\sum_{i=1}^n (z_i - \bar{z}) \bx_i}^2 +\sum_{i=1}^n \mu_i - V\epsilon}{}{}
 {\label{dual-SVM-fair}}
\addConstraint{\mu_i}{\in [0,C],}{\qquad i=1, \ldots ,n} \tag{$\epsilon$-fair-SVM1-D}
\addConstraint{\sum_{i=1}^n \mu_i y_i}{= 0}
\addConstraint{\gamma}{\in [-V,V]}
\end{maxi*}
where we have introduced the variable $V$ to eliminate the absolute value function $\abs{\gamma}$ in the objective. 
Notice that when $\gamma = 0$ and neither of the fairness constraints bind, we recover the standard dual SVM program. Since we are concerned with fairness constraints that alter an optimal solution, we are interested in cases in which $V$ is strictly positive. As such, we can rewrite the preceding as
\ignore{
\begin{maxi*}
 {\substack{\bm{\mu}, \sigma, \beta_-, \beta_+,\\ \bm{\alpha_-}, \bm{\alpha_+}}}{ -\frac{1}{2}\norm{\sum_{i=1}^n \mu_i y_i (I - P_\bu)\bx_i}^2+\sum_{i=1}^n (1+\alpha_{-,i} - \alpha_{+,i} + \sigma y_i) \mu_i }{}{}
\breakObjective{\qquad +\frac{N\sum_{i}\mu_i y_i \langle \bx_i, \bu \rangle}{\norm{\bu}^2}(\beta_- - \beta_+) + \epsilon(2\beta_-)- C\sum_{i=1}^n \alpha_{-,i}} 
\addConstraint{\alpha_i, \alpha_+, \beta_-, \beta_+, \sigma}{\geq 0}{\qquad i=1, \ldots ,n} \tag{$\epsilon$-fair SVM2-D}
\label{SVM2-D}
\addConstraint{\beta_- + \beta_+}{= \epsilon}
\addConstraint{\mu_i y_i (I - P_\bu)\bx_i}{\in [-V,V]}
\end{maxi*}}

\begin{maxi*}
 {\substack{\bm{\mu}, \beta_-, \beta_+}}{ -\frac{1}{2}\norm{\sum_{i=1}^n \mu_i y_i (I - P_\bu)\bx_i}^2+\sum_{i=1}^n \mu_i +\frac{2n\sum_{i}\mu_i y_i \langle \bx_i, \bu \rangle + n^2(\beta_- - \beta_+)}{2\norm{\bu}^2}(\beta_- -\beta_+)}{}{}
\addConstraint{\mu_i}{\in [0,C],}{\qquad i=1, \ldots ,n} \tag{$\epsilon$-fair SVM2-D}
\label{SVM2-D}
\addConstraint{\sum_{i=1}^n \mu_i y_i}{= 0}
\addConstraint{\beta_-, \beta_+}{ \geq 0}
\addConstraint{\beta_- + \beta_+}{ = \epsilon}
\end{maxi*}
where $I, P_\bu \in \mathbb{R}^{d\times d}$. The former is the identity matrix, and the latter is the projection matrix onto the vector defined by $\bu = \sum_{i=1}^n (z_i - \bar{z})\bx_i$. As was also observed by Donini et al., the $\epsilon = 0$ version of (\ref{SVM2-D}) is thus equivalent to the standard formulation of the dual SVM program with Kernel $K(\bx_i, \bx_j) = \langle (I-P_\bu)\bx_i, (I-P_\bu)\bx_j \rangle$ \cite{donini2018empirical}.

\subsection{Sensitivity on Candidates}
In this Section, we investigate the effects of perturbing a fixed $\epsilon$-fair SVM by some $\Delta \epsilon$ on the classification outcomes that are issued. We ask, \textit{``How are groups' classifications, and thus their utilities, impacted when a learner tightens or loosens her fairness constraint?''} The insight is that rather than perform sensitivity analysis directly on the Dual variable corresponding to the fairness constraint---which, as we will see in Section 4.3, only gives information about the change in the learner's objective value---we track changes in the classifier's behavior by analyzing the effect of $\Delta \epsilon$ on another set of Dual variables: $\mu_i$ that correspond to the Primal margin constraints. We harness techniques that have been used in finding SVM regularization solution paths to demarcate the range of perturbations that yield outcomes that either improve or worsen a group's utility \cite{hastie2004entire,wang2007kernel,diehl2003svm}.
 
We show that perturbations to $\epsilon$ do not necessarily correspond to meaningful changes in group utilities. We find that decreasing $\epsilon$, which corresponds to making the Soft-Margin SVM ``more fair,'' need not translate into improved utilities. In fact, in terms of welfare, we show that policies that na{\"i}vely prefer ``more fair'' classifier solutions do not abide by the Pareto Principle defined in (\ref{Pareto}). Perturbations of $\epsilon$ to $\epsilon + \Delta \epsilon < \epsilon$, which tighten the fairness constraint, do not generally translate into improved outcomes for either of the two groups. And since a learner's loss never decreases when a fairness condition is made more strict and classifier outcomes can make both groups worse-off, then optimal SVM classifiers that are subject to more ``unfair'' constraint can yield classifications that Pareto dominate those that arise under more ``fair'' conditions. That is, every stakeholder group prefers the outcomes issued by the ``unfair" classifier. 
\\ 




Define a function $p(\epsilon): \mathbb{R} \rightarrow \mathbb{R}$ that assigns the optimal value of the $\epsilon$-fair loss minimizing program ($\epsilon$-fair-SVM1-P). We begin at a solution $p(\epsilon)$ and consider classifications at the solution $p(\epsilon + \Delta \epsilon)$, where $\Delta \epsilon$ can be either positive or negative. For clarity of exposition, we assume that the positive covariance fairness constraint binds, and thus that $\gamma = V > 0$. This is without loss of generalization---the same analyses apply when $\gamma = V < 0$. The Dual $\epsilon$-fair SVM program is thus

\begin{mini*}
 {\bm{\mu}}{\frac{1}{2}\norm{\sum_{i=1}^n \mu_i y_i (I - P_\bu)\bx_i}^2 -\sum_{i=1}^n \mu_i +\frac{n \epsilon(2\sum_{i}\mu_i y_i \langle \bx_i, \bu \rangle - n \epsilon)}{2\norm{\bu}^2}}{}{}
\addConstraint{\mu_i}{\in [0,C],}{\qquad i=1, \ldots ,n} \tag{$\epsilon$-fair SVM-D}
\label{SVM-D}
\addConstraint{\sum_{i=1}^n \mu_i y_i}{= 0}
\end{mini*}
At the optimal solution, the classification fate of each data point $\bx_i$ is encoded in the dual variable $\mu_i^*$. Let $D$ be the value of the objective function in (\ref{SVM-D}), then we have that
\begin{align}
\label{free}
    \frac{\partial D}{\partial \mu_j^*} > 0 &\longrightarrow \mu_j^* = 0, \text{ and } j \in \mathcal{F}\\
\label{support}
    \frac{\partial D}{\partial \mu_j^*} = 0 &\longrightarrow \mu_j^* \in [0,C], \text{ and } j \in \mathcal{S}\\
\label{error}
     \frac{\partial D}{\partial \mu_j^*} < 0 &\longrightarrow \mu_j^* = C, \text{ and } j \in \mathcal{E}
\end{align}
Partitioning the dataset $\mathcal{X}$ based on $\frac{\partial D}{\partial \mu_j^*}$ at any optimal solution, $\bx_j$ are either free vectors (\ref{free}), support vectors in the margin (\ref{support}), or error vectors (\ref{error}). To analyze the impact that applying a fairness constraint has on a group's welfare, we can track the behavior of $\frac{\partial D}{\partial \mu_i}$ and observe how vectors' membership in sets $\mathcal{F}$, $\mathcal{S}$, and $\mathcal{E}$ change under a perturbation to $\epsilon$. This information will in turn reveal how classifications change or are stable upon tightening or loosening a fairness constraint. 

Fairness perturbations are not guaranteed to shuffle data points across the different membership sets $ \mathcal{F}, \mathcal{S}$, and $\mathcal{E}$. It is clear that for $j\in \{\mathcal{F}, \mathcal{E}\}$, so long as a perturbation does not cause $\frac{\partial D}{\partial \mu_j^{\epsilon}}$ to flip signs or to vanish, then $j$ will belong to the same set and $h^\epsilon(\bx_j) = h^{\epsilon+\Delta \epsilon}(\bx_j)$ where $h^\epsilon(\bx_j)$ gives the $\epsilon$-fair classification outcome for $\bx_j$. In these cases, a candidate's welfare is unaffected by the change in the fairness tolerance level. In contrast, support vectors $\bx_j$ with $j \in \mathcal{S}$ are subject to a different condition to ensure that they stay in the margin: $\frac{\partial D}{\partial \mu_i^\epsilon} = \frac{\partial D}{\partial \mu_i^{\epsilon + \Delta \epsilon}} = 0$. So we have that 
\begin{align}
\label{dD-dm}
    \frac{\partial D}{\partial \mu_j^\epsilon} = \sum_{i=1}^n \mu_i y_i (I -P_\bu) \bx_i y_j (I-P_\bu)\bx_j + \frac{n \epsilon y_j \langle \bx_j, \bu \rangle}{\norm{\bu}^2} + by_j - 1 = 0
\end{align}
Let $r_j \Delta \epsilon$ be the change in $\mu_j$ upon perturbing $\epsilon$ by $\Delta \epsilon$, then we have 
\begin{align}
\label{update-r}
\mu_j^{\epsilon + \Delta \epsilon} = \mu_j^{\epsilon} + r_j\Delta \epsilon
\end{align}
where $\mu_j^\epsilon$ is the optimal $\mu_j$ value at the optimal solution $p(\epsilon)$. Let $r_0$ be the change in the offset $b$; then we can solve for $\bm{r_j} \in \mathbb{R}^{n+1}$ for all unchanging $\bx_j \in \mathcal{S}$ by taking the finite difference of (\ref{dD-dm}) with respect to a $\Delta \epsilon$ perturbation,
\begin{align*}
    \sum_{i=1}^n r_i \Delta \epsilon y_i y_j \langle (I-P_\bu) \bx_i, (I-P_\bu) \bx_j\rangle + r_0y_j = \frac{-n y_j \Delta \epsilon}{\norm{\bu}^2}\langle \bu, \bx_j \rangle
\end{align*}
It is clear that for all $i \in \{\mathcal{F}, \mathcal{S}\}$, the corresponding $\mu_i^\epsilon$ sensitivity to perturbations must have $r_i \Delta \epsilon = 0$, so $r_i = 0$ for all $i$. We can then simplify the previous expression by summing only over those $r_i$ where $i\in \mathcal{S}$.
\begin{align*}
    \sum_{i \in \mathcal{S}} r_i \Delta \epsilon y_i y_j \langle (I-P_\bu) \bx_i, (I-P_\bu) \bx_j\rangle + r_0y_j = \frac{-n y_j \Delta \epsilon}{\norm{\bu}^2}\langle \bu, \bx_j \rangle
\end{align*}
Thus $\bm{r_j}$ can be found by inverting the matrix 
\begin{align}
K=
\left(
\renewcommand{\arraystretch}{2}
\begin{array}{c|cccc}
0 & y_1  & y_2  & \dots & y_{|\mathcal{S}|}\\
\cline{1-5}
y_1 & & & & \\
\vdots & \multicolumn{4}{|c} {y_i y_j \langle (I-P_\bu) \bx_i, (I-P_\bu) \bx_j\rangle} \\
y_2 & & & & \\
y_{|\mathcal{S}|} & & & &
\end{array}
\right) \in \mathbb{R}^{(|\mathcal{S}|+1) \times (|\mathcal{S}|+1)}
\end{align}
where indices are renumbered to reflect only those $i, j \in \mathcal{S}$. This matrix is invertible so long as the Kernel $K(\bx_i, \bx_j) = \langle(I-P_\bu) \bx_i, (I-P_\bu)\bx_j \rangle$ forms a positive definite matrix. Since the objective function in (\ref{SVM-D}) is quadratic, then a sufficient condition for the Kernel matrix to be invertible is that it is strictly convex---we assume this as a technical condition. Then the sensitivities of $\mu_j$ for $j \in \mathcal{S}$ to $\Delta \epsilon$ perturbations are given by 
\begin{align}
\label{r}
    \bm{r}= K^{-1}\Big(\frac{-n}{\norm{\bu}^2}\bv\Big), \hspace{3 pt} \text{ where } \bv = \begin{bmatrix}
0 \\
\vdots \\
 y_j \langle \bu, \bx_j \rangle \\
\vdots\\
\end{bmatrix} \in \mathbb{R}^{|\mathcal{S}|+1}
\end{align}
The sensitivities $r_j \neq 0$ for $j\in \mathcal{S}$ do affect the quantities $\frac{\partial D}{\partial \mu_j}$ for all $j \in [n]$, and thus we need additional conditions to hold to ensure that the vectors not on the margin are also unshuffled by the fairness perturbation. Define 
\begin{align}
\label{d}
d_j = \frac{\partial D}{\partial \mu_j \partial \epsilon} = \sum_{i \in \mathcal{S}} r_i y_i y_j \langle(I-P_\bu) \bx_i, (I-P_\bu) \bx_j \rangle +r_0 y_j 
\end{align}
and the quantity of interest for stability of vectors $\bx_j$ for $j \notin \mathcal{S}$ is then given by
\begin{align}
\dfrac{ \frac{\partial D}{\partial \mu_j^\epsilon} }{d_j} \gtrless 0 
\end{align}
where $>$  entails that $j \in \mathcal{F}$ and $<$ entails that $j \in \mathcal{E}$. Now we bound $\Delta \epsilon$ such that no vectors are shuffled across different sets. It follows that perturbations in this range do not alter classifications. 

\begin{proposition}
\label{stable-perturbations}
Let $p(\epsilon)$ be the optimal $\epsilon$-fair SVM loss and denote the optimal $\bm{\mu}^*$ at $p(\epsilon)$ as $\bm{\mu^\epsilon}$. Let $d_j = \frac{\partial D}{\partial \mu_j \partial \epsilon}$ and $g_j = 1 - \Big(\sum_{i=1}^n \mu^\epsilon_i y_i (I-P_\bu)\bx_i y_j (I-P_\bu)\bx_j + \frac{n \epsilon y_j \langle \bx_j, \bu \rangle}{\norm{\bu}^2} + b y_j\Big)$.  All perturbations of $\epsilon$ in the range $\Delta \epsilon \in \big( \max_j m_j, \min_j M_j\big)$ where
\begin{align}
\label{range-stable}
m_j = \begin{cases} \begin{cases} {\frac{g_j}{d_j}}, & j \in \mathcal{F},  d_j > 0 \\ -\infty, &  j \in \mathcal{F},  d_j < 0 \end{cases} \\ \min \{\frac{C-\mu_j^\epsilon}{r_j}, \frac{-\mu^\epsilon_j}{r_j}\}, & j \in \mathcal{S} \\ \begin{cases} -\infty, & j\in \mathcal{E}, d_j >0 \\ {\frac{g_j}{d_j}}, & j \in \mathcal{E},  d_j  <0 \end{cases} \end{cases}, \qquad M_j = \begin{cases} \begin{cases} \infty, & j \in \mathcal{F},  d_j > 0 \\ {\frac{g_j}{d_j}}, &  j \in \mathcal{F},  d_j < 0 \end{cases} \\ \min \{\frac{C-\mu_j^\epsilon}{r_j}, \frac{-\mu^\epsilon_j}{r_j}\}, & j \in \mathcal{S} \\ \begin{cases} {\frac{g_j}{d_j}}, & j\in \mathcal{E}, d_j >0 \\ \infty, & j \in \mathcal{E},  d_j  <0 \end{cases}\end{cases} 
\end{align}
yield no changes to memberships in the partition $\{\mathcal{F}, \mathcal{S}, \mathcal{E}\}$.
\end{proposition}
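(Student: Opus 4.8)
The plan is to treat each datapoint separately and ask, for its current partition label, what range of $\Delta\epsilon$ keeps it from crossing into a different set; the admissible range in (\ref{range-stable}) is then just the intersection of these per-point ranges. The engine that makes this tractable is the piecewise-linearity of the solution path: on any interval of $\epsilon$ over which the partition $\{\mathcal{F},\mathcal{S},\mathcal{E}\}$ is frozen, the active KKT system is linear, so the support-vector multipliers move affinely as $\mu_j^{\epsilon+\Delta\epsilon}=\mu_j^\epsilon+r_j\Delta\epsilon$ with $\bm{r}$ given by (\ref{r}), while for $j\notin\mathcal{S}$ the multiplier is pinned at $0$ or $C$ and instead the gradient $\frac{\partial D}{\partial\mu_j}$ moves affinely at rate $d_j$ from (\ref{d}). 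First I would record the algebraic identity $g_j=-\frac{\partial D}{\partial\mu_j^\epsilon}$, obtained by comparing the definition of $g_j$ with (\ref{dD-dm}); this lets me write the perturbed gradient cleanly as $\frac{\partial D}{\partial\mu_j^{\epsilon+\Delta\epsilon}}=-g_j+d_j\Delta\epsilon$.

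Next I would convert the three membership criteria (\ref{free})--(\ref{error}) into inequalities in $\Delta\epsilon$. For a support vector $j\in\mathcal{S}$, staying on the margin forces $\frac{\partial D}{\partial\mu_j}=0$ throughout, so the binding condition is the box $0\le\mu_j^\epsilon+r_j\Delta\epsilon\le C$ from (\ref{update-r}); solving the two inequalities produces the endpoints $\frac{-\mu_j^\epsilon}{r_j}$ and $\frac{C-\mu_j^\epsilon}{r_j}$, and taking the smaller of the pair as the lower end $m_j$ and the larger as the upper end $M_j$ handles both signs of $r_j$ at once. For a free vector $j\in\mathcal{F}$ the requirement is $\frac{\partial D}{\partial\mu_j^{\epsilon+\Delta\epsilon}}>0$, i.e. $d_j\Delta\epsilon>g_j$; dividing by $d_j$ and tracking its sign yields exactly the two-branch expression for $m_j,M_j$ in the $\mathcal{F}$ case (a lower bound $g_j/d_j$ when $d_j>0$, an upper bound when $d_j<0$, and the complementary side unbounded). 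For an error vector $j\in\mathcal{E}$ the requirement is the reverse inequality $d_j\Delta\epsilon<g_j$, which flips the roles of $d_j>0$ and $d_j<0$ and reproduces the $\mathcal{E}$ branch.

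Finally I would intersect. A perturbation $\Delta\epsilon$ leaves every point in its original set precisely when it satisfies all of the above simultaneously, which is exactly $\Delta\epsilon\in(\max_j m_j,\min_j M_j)$; since the partition as a whole is then unchanged, the earlier observation that frozen membership implies $h^\epsilon(\bx_j)=h^{\epsilon+\Delta\epsilon}(\bx_j)$ for every $j$ closes the argument. The main obstacle is not the bookkeeping but justifying that the affine update is \emph{exact}, not merely a first-order approximation, on this interval: this requires that the reduced matrix $K$ be invertible so that $\bm{r}$ in (\ref{r}) is well-defined and constant, and that no degenerate ties occur at the endpoints (which is precisely why the interval is taken open). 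I would lean on the assumed strict convexity of the dual objective in (\ref{SVM-D})---which guarantees invertibility of $K$---together with the standard regularization-path argument that the active set stays fixed and the path stays linear until one of the boundary conditions collected in (\ref{range-stable}) is first met.
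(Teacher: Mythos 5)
Your proposal is correct and follows essentially the same route as the paper's own proof: the same per-point case analysis converting the membership criteria (\ref{free})--(\ref{error}) into linear inequalities in $\Delta\epsilon$ (gradient-sign conditions with rate $d_j$ for $j\in\mathcal{F},\mathcal{E}$, the box constraint $0\le\mu_j^\epsilon+r_j\Delta\epsilon\le C$ for $j\in\mathcal{S}$), followed by intersecting to get $(\max_j m_j,\min_j M_j)$. Your explicit identity $g_j=-\frac{\partial D}{\partial\mu_j^\epsilon}$ and your handling of the $\mathcal{S}$ case (smaller endpoint as $m_j$, larger as $M_j$) are also the correct reading---the stated formula's $M_j=\min\{\cdot,\cdot\}$ for $j\in\mathcal{S}$ is evidently a typo for $\max$, as the paper's own proof confirms.
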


We defer the interested reader to the Appendix for the full proof of this Proposition. The result follows from observing that for $i \in \mathcal{F}$, any perturbations $\Delta \epsilon$ that increase $\frac{\partial D}{\partial \mu^\epsilon_i}$ do not threaten $i$'s exiting $\mathcal{F}$; if $\Delta \epsilon$ leads to a decrease in $\frac{\partial D}{\partial \mu^\epsilon_i}$, then $i$ can enter $\mathcal{S}$. Inversely, perturbations $\Delta \epsilon$ that decrease $\frac{\partial D}{\partial \mu^\epsilon_i}$ ensure that $i\in \mathcal{E}$ stay in the same partition, but perturbations that increase $\frac{\partial D}{\partial \mu^\epsilon_i}$ can cause $i$ to shuffle into $\mathcal{S}$. Support vectors in the margin must maintain $\mu_i^{\epsilon +\Delta \epsilon} \in [0,C]$. Once $\mu_i^\epsilon$ hits either endpoint of the interval, the vector $\bx_i$ risks shuffling across to $\mathcal{F}$ or $\mathcal{E}$. Computing these transition inequalities results in a set of conditions that ensure that a partition is stable. Since $\Delta \epsilon$ can be either positive or negative, we take the maximum of the lower bounds and the minimum of the upper bounds to arrive at the range of stable perturbations given in (\ref{range-stable}).\\

This Proposition reveals a surprising ineffectiveness of fairness constraints. So long as the fairness constraint is binding and its associated dual variable $\gamma > 0$, then tightening or loosening a fairness constraint \textit{does} alter the loss of the optimal learner classifier---the actual SVM solution changes---yet analyzed from the perspective of the individual agents $\bx_i$, so long as the $\Delta \epsilon$ perturbation occurs within the range given by (\ref{range-stable}), classifications issued under this $\epsilon+\Delta \epsilon$-fair SVM solution are identical to those under the $\epsilon$-fair solution. Thus despite the apparent more `fair'' signal that a classifier abiding by $\epsilon+\Delta \epsilon$ sends, agents are no better off in terms of welfare. This result is summarized in the following Corollary. 

\begin{corollary} 
\label{corollary-stable}
Let $\{p(\epsilon), W_0(\epsilon), W_1(\epsilon)\}$ be a triple expressing the utilities of the learner, group $z=0$, and group $z=1$ under the $\epsilon$-fair SVM solution. Then for any $\Delta \epsilon \in(\max_j m_j, 0)$ where $m_j$ is defined in (\ref{range-stable}), $\{p(\epsilon), W_0(\epsilon), W_1(\epsilon)\} \succsim \{p(\epsilon+\Delta \epsilon), W_0(\epsilon+\Delta \epsilon), W_1(\epsilon+\Delta \epsilon)\}$.
\end{corollary}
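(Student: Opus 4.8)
The plan is to reduce the claimed weak Pareto domination to three separate stakeholder comparisons---the learner, group $z=0$, and group $z=1$---and to show that each one weakly prefers the outcome issued at $\epsilon$. The first step is to note that the interval $(\max_j m_j, 0)$ is a subinterval of the stable range $(\max_j m_j, \min_j M_j)$ of Proposition~\ref{stable-perturbations}: at $\Delta \epsilon = 0$ no vector can change partition, so necessarily $0 \le \min_j M_j$, and hence every $\Delta \epsilon \in (\max_j m_j, 0)$ is simultaneously a stable perturbation and a constraint-\emph{tightening} one (since $\Delta \epsilon < 0$).

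For the two group terms I would invoke Proposition~\ref{stable-perturbations} directly. Any $\Delta \epsilon$ in this range leaves each $\bx_i$'s membership in the partition $\{\mathcal{F}, \mathcal{S}, \mathcal{E}\}$ unchanged, and by the discussion preceding the Corollary this yields $h^\epsilon(\bx_i) = h^{\epsilon + \Delta \epsilon}(\bx_i)$ for every $i \in [n]$. Because $W_0$ and $W_1$ are defined purely as averages of the per-individual labels $\frac{h(\bx_i)+1}{2}$, both group welfares are then literally unchanged, $W_0(\epsilon) = W_0(\epsilon + \Delta \epsilon)$ and $W_1(\epsilon) = W_1(\epsilon + \Delta \epsilon)$, so each group is indifferent between the two solutions (hence weakly prefers either).

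For the learner term I would use monotonicity of the constrained optimum. Since $\Delta \epsilon < 0$ strictly tightens the fairness constraint $f_{\bm{\theta}, b}(\bx,y) \le \epsilon$, the feasible set of the primal $\epsilon$-fair SVM shrinks, so its optimal loss cannot decrease: $p(\epsilon) \le p(\epsilon + \Delta \epsilon)$. Equivalently, this is the nonnegative shadow price of the binding constraint ($\gamma = V > 0$). As the learner minimizes loss, she therefore weakly prefers the $\epsilon$ solution. Combining the three comparisons, every stakeholder weakly prefers the outcome at $\epsilon$, which is precisely the weak Pareto domination $\{p(\epsilon), W_0(\epsilon), W_1(\epsilon)\} \succsim \{p(\epsilon + \Delta \epsilon), W_0(\epsilon + \Delta \epsilon), W_1(\epsilon + \Delta \epsilon)\}$ in the sense of Definition~\ref{Pareto}.

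The one delicate step---and the part I would scrutinize most carefully---is the passage from partition stability to classification stability that the group argument relies on. Membership in $\mathcal{F}$ or $\mathcal{S}$ pins down the sign of $\bm{\theta}^\intercal \bx_i + b$, but for an error vector $\mu_i = C$ is held fixed while the hyperplane itself continues to move with $\epsilon$, so one must verify that no such point crosses the decision boundary over the range. This is exactly what the $\frac{g_j}{d_j}$ thresholds for $j \in \mathcal{E}$ and the interval conditions on $\mu^\epsilon_j$ in (\ref{range-stable}) are engineered to rule out; I would make explicit that remaining within $(\max_j m_j, 0)$ keeps every $\frac{\partial D}{\partial \mu_j}$ on the same side of zero and each $\mu_j^\epsilon \in [0,C]$, so that no label can flip and the welfare equalities above indeed hold.
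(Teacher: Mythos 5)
Your proof is correct and takes essentially the same route as the paper's: group welfares are unchanged because every $\Delta \epsilon \in (\max_j m_j, 0)$ lies in the stable region of Proposition~\ref{stable-perturbations} so no classifications change, and the learner's optimum satisfies $p(\epsilon) \le p(\epsilon + \Delta \epsilon)$ since tightening the constraint shrinks the feasible set, yielding the weak domination of the triples. Your closing scrutiny of the step from partition stability to label stability is a welcome extra precision that the paper leaves implicit (its welfare accounting in Algorithm~\ref{full-alg3} simply identifies the classification outcome with membership in $\{\mathcal{F}, \mathcal{S}, \mathcal{E}\}$), but it does not alter the argument.
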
 

By demarcating the limits of $\Delta \epsilon$ perturbations that yield no changes to the sets $\mathcal{F}, \mathcal{S}, \mathcal{E}$, we can move on to consider the effects of perturbations $\Delta \epsilon$ that exceed the stable region given by (\ref{range-stable}). There are four ways that vectors can be shuffled across the partition:
\begin{enumerate}
\item $j \in \mathcal{E}^\epsilon$ moves into $\mathcal{S}^{\epsilon + \Delta \epsilon}$ 
\\
\item $j \in \mathcal{F^\epsilon}$ moves into $\mathcal{S}^{\epsilon + \Delta \epsilon}$
\\
\item $j \in \mathcal{S}^\epsilon$ moves into $\mathcal{F}^{\epsilon + \Delta \epsilon}$
\\
\item $j \in \mathcal{S}^\epsilon$ moves  into $\mathcal{E}^{\epsilon + \Delta \epsilon}$ 
\end{enumerate}

At each ``breakpoint'' event when $\Delta \epsilon$ reaches $\max_j m_j$ or $\min_j M_j$, the set $\mathcal{S}$ changes, and $r_j$ for all $j \in \mathcal{S}$ must be recomputed via (\ref{r}). The new $r_j$ sensitivities hold until the next breakpoint. 

\begin{lemma}
\label{lemma-piecewise}
$\mu_i^\epsilon$ for all $i \in [n]$ are piecewise linear in $\epsilon$.
\end{lemma}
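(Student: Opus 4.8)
The plan is to exploit the active-set structure already developed: on any interval of $\epsilon$ over which the partition $\{\mathcal{F}, \mathcal{S}, \mathcal{E}\}$ is fixed, each $\mu_i^\epsilon$ is affine in $\epsilon$, and the full function is the concatenation of these affine pieces across breakpoints. First I would fix such a stable interval, whose existence and extent are exactly the content of Proposition \ref{stable-perturbations}. On this interval the free vectors satisfy $\mu_i^\epsilon = 0$ and the error vectors satisfy $\mu_i^\epsilon = C$, both trivially affine (constant) in $\epsilon$. For the support vectors $i \in \mathcal{S}$, the stationarity condition $\frac{\partial D}{\partial \mu_i^\epsilon} = 0$ must persist throughout the interval; differentiating it in $\epsilon$ yields the linear system (\ref{r}), whose solution $\bm{r}$ depends only on the fixed set $\mathcal{S}$ and the (fixed) data and is therefore constant on the interval. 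The update rule (\ref{update-r}) then reads $\frac{d\mu_i^\epsilon}{d\epsilon} = r_i$, a constant, so $\mu_i^\epsilon$ is affine on the interval.

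Next I would stitch the pieces together and argue continuity at the breakpoints. A breakpoint occurs precisely when $\Delta\epsilon$ reaches $\max_j m_j$ or $\min_j M_j$, i.e.\ when some index crosses between two sets. By the construction of these bounds the crossing happens at the shared boundary value: a point moving between $\mathcal{S}$ and $\mathcal{F}$ does so when its $\mu_i^\epsilon$ equals $0$, and a point moving between $\mathcal{S}$ and $\mathcal{E}$ does so when $\mu_i^\epsilon$ equals $C$. Since the value approached from the left equals the value assumed on the right, each $\mu_i^\epsilon$ is continuous across the breakpoint; only its slope $r_i$ changes, reflecting the recomputation of (\ref{r}) under the new $\mathcal{S}$. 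Hence each $\mu_i^\epsilon$ is continuous and affine on every stable interval; it remains only to verify that the breakpoints are finite in number.

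Finally I would establish finiteness of the breakpoints in any bounded range of $\epsilon$, which is what upgrades ``affine on each stable interval'' to genuinely \emph{piecewise} linear. The clean framing is that the dual (\ref{SVM-D}) is a parametric quadratic program whose Hessian in $\bm{\mu}$ is the fixed kernel matrix $K(\bx_i,\bx_j)=\langle(I-P_\bu)\bx_i,(I-P_\bu)\bx_j\rangle$ and whose only $\epsilon$-dependence enters the linear term of the objective affinely; under the strict-convexity (positive definiteness) assumption already imposed, the minimizer of such a program traces a continuous piecewise-linear path with finitely many breakpoints on any compact parameter interval. I expect this finiteness to be the main obstacle, since one must rule out an infinite cascade of active-set changes, such as a point oscillating in and out of $\mathcal{S}$. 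This is handled by the standard nondegeneracy assumption used in the SVM regularization-path analyses of \cite{hastie2004entire,wang2007kernel,diehl2003svm}: under it each breakpoint strictly alters the active set, the travel direction $\bm{r}$ is uniquely determined, and the path cannot revisit a configuration, so breakpoints are finite and the concatenation of the affine pieces from the first step yields the claimed piecewise-linear $\mu_i^\epsilon$.
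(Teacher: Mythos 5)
Your first two steps reproduce the paper's own proof: on each stable interval from Proposition~\ref{stable-perturbations}, points in $\mathcal{F}$ and $\mathcal{E}$ have $\mu_i^\epsilon$ pinned at $0$ and $C$ respectively while support vectors move affinely with the constant slope $r_i$ obtained from (\ref{r}) via (\ref{update-r}), and continuity at breakpoints holds because every index transition passes through $\mathcal{S}$ at the shared boundary value ($\mu_j = 0$ for $\mathcal{S}\leftrightarrow\mathcal{F}$, $\mu_j = C$ for $\mathcal{S}\leftrightarrow\mathcal{E}$), so only the slope jumps --- this is exactly the case analysis the paper carries out. Where you genuinely go beyond the paper is your third step: the paper's proof stops at ``continuous, and constant or linear between breakpoints,'' tacitly assuming the breakpoints are finite in number, which is in fact needed for ``piecewise linear'' in the usual sense. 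Your framing of (\ref{SVM-D}) as a parametric quadratic program whose $\epsilon$-dependence enters the effective linear term affinely (the $\epsilon^2$ term is constant in $\bm{\mu}$ and hence irrelevant to the minimizer), combined with the strict-convexity assumption the paper already imposes, correctly yields that each active set determines a unique affine piece and that there are only finitely many active sets; the remaining worry, an infinite cascade of active-set changes accumulating at some $\epsilon$, is precisely what the nondegeneracy condition from the regularization-path literature excludes. The one caveat is that this nondegeneracy should be stated explicitly as a hypothesis: the paper's technical condition guarantees invertibility of the kernel matrix $K$, which pins down $\bm{r}$, but does not by itself rule out degenerate ties at breakpoints. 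With that hypothesis made explicit, your proof is a strict refinement of the paper's.
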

We defer the full proof to the Appendix but provide a brief exposition of the result. If perturbations $\Delta \epsilon$ are in the stable region given in (\ref{range-stable}), then for all $\mu_j(\epsilon)$ with $j \in \{\mathcal{F}, \mathcal{E}\}$, $\mu_j(\epsilon) = \mu_j(\epsilon +\Delta \epsilon)$. For points $\bx_j$ that are in the margin and thus $j \in \mathcal{S}$, $\mu_j(\epsilon+\Delta \epsilon) = \mu_j(\epsilon) +  r_j\Delta \epsilon$. Since index transitions at the breakpoint only occur by way of the margin, showing that the $\mu_i(\epsilon)$ paths are continuous is sufficient in order to conclude that they are piecewise linear.  \\

By parameterizing dual variables $\mu_j(\epsilon)$, we can associate a group utility with the optimal classification scheme of each of the $\Delta \epsilon$ perturbation breakpoints. As already illustrated, partitions are static in the stable regions around each breakpoint, so group utilities will also be unchanged in these regions. As such, we can directly compare group utilities at neighboring breakpoints. Of the four possible events that occur a breakpoint, index transitions between the partitions $\mathcal{S}$ and $\mathcal{E}$ correspond to changed classifications that thus affect group utilities. The following Proposition characterizes those breakpoint transitions that effect utility triples for group $A$, group $B$, and the learner $\{p(\epsilon),W_0(\epsilon), W_1(\epsilon)\}$ that are strictly Pareto dominated by the utility triple supported at a neighboring $\epsilon$ breakpoint. The full proof is left to the Appendix.

\begin{proposition}
\label{compare-utility}
Consider the utility triple at the optimal $\epsilon$-fair SVM solution given by $\{p(\epsilon),W_0(\epsilon), W_1(\epsilon)\}$. Let $b_L = \max_j m_j < 0$ be the neighboring lower breakpoint, and let $b_U = \min_j M_j > 0$ be the neighboring upper breakpoint. If $b_L =\frac{g_j}{d_j}$ where $j \in \mathcal{E}^\epsilon$ and $y_j = -1$ or if $b_L = \frac{C - \mu_j^\epsilon}{r_j}$ where $j\in \mathcal{S}^\epsilon$ and $y_j = +1$, then \[\{p(\epsilon + b_L),W_0(\epsilon + b_L), W_1(\epsilon+b_L)\}\} \prec \{p(\epsilon),W_0(\epsilon), W_1(\epsilon)\}\]
Let $b_U = \frac{g_j}{d_j}$ where  $j\in \mathcal{E}^\epsilon$ and $y_j = +1$ or if $b_U = \frac{C - \mu_j^\epsilon}{r_j}$ where $j\in \mathcal{S}^\epsilon$ and $y_j = -1$, then \[\{p(\epsilon + b_U),W_0(\epsilon+b_U), W_1(\epsilon+b_U)\}\succ \{p(\epsilon),W_0(\epsilon), W_1(\epsilon)\}\]
 \end{proposition}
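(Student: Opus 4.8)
The plan is to prove the comparison coordinate by coordinate on the triple $\{p, W_0, W_1\}$, exploiting the fact that between consecutive breakpoints the partition $\{\mathcal{F},\mathcal{S},\mathcal{E}\}$, and hence every classification, is frozen (Proposition~\ref{stable-perturbations} together with Lemma~\ref{lemma-piecewise}). It therefore suffices to compare the triple at $\epsilon$ against the triples at the two immediately neighbouring breakpoints $\epsilon+b_L$ and $\epsilon+b_U$. For the learner's coordinate I would invoke the shadow price of the binding fairness constraint: since $\gamma=V>0$, the sensitivity argument of the following subsection gives $\tfrac{dp}{d\epsilon}=-V<0$, so $p$ is strictly decreasing in $\epsilon$ on the relevant interval. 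Consequently tightening to $\epsilon+b_L$ (with $b_L<0$) strictly \emph{raises} the learner's loss, while loosening to $\epsilon+b_U$ (with $b_U>0$) strictly \emph{lowers} it. This already fixes the learner's coordinate in the direction each claim requires, and it is the coordinate with which the two group coordinates must be shown to agree in order to obtain a genuine Pareto ordering.

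The heart of the argument is the group coordinates. First I would observe that, generically, exactly one index $j^\star$ attains $\max_j m_j$ (respectively $\min_j M_j$), so exactly one point changes membership at the breakpoint, and it lies in a single group $z_{j^\star}$; the other group's utility is thus identical at $\epsilon$ and at the breakpoint. It remains to read off the direction of $j^\star$'s predicted-label change. Here I would use the identity implicit in (\ref{dD-dm}): writing $\rho_j := y_j(\bm{\theta}^\intercal\bx_j+b)$ for the signed functional margin, one has $\rho_j=\tfrac{\partial D}{\partial\mu_j}+1$ and hence $g_j=1-\rho_j$, so the issued label is $\hat y_j = y_j\,\mathrm{sgn}(\rho_j)$. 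A point sitting in $\mathcal{S}$ has $\rho_j=1>0$, i.e. it is classified on the side of its own label $y_j$, and a transition into $\mathcal{E}$ is the mechanism that drives $\rho_{j^\star}$ across the decision boundary. Combining the transition type with $y_{j^\star}$ then pins the direction: at the lower breakpoint, case $\mathcal{E}\!\to\!\mathcal{S}$ with $y_{j^\star}=-1$ seats a negative point on its own negative margin, and case $\mathcal{S}\!\to\!\mathcal{E}$ with $y_{j^\star}=+1$ pushes a positive point off its margin toward the negative side; both \emph{remove} a positive classification, lowering $W_{z_{j^\star}}$. At the upper breakpoint the mirror cases with $y_{j^\star}=+1$ and $y_{j^\star}=-1$ \emph{add} a positive classification, raising $W_{z_{j^\star}}$.

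Assembling the coordinates yields both claims. At $\epsilon+b_L$ the learner's loss is strictly larger, the affected group's utility is strictly smaller (by $1/n_{z_{j^\star}}$), and the other group's utility is unchanged, giving $\{p(\epsilon+b_L),W_0(\epsilon+b_L),W_1(\epsilon+b_L)\}\prec\{p(\epsilon),W_0(\epsilon),W_1(\epsilon)\}$; the upper-breakpoint claim follows symmetrically with every inequality reversed. I expect the main obstacle to be the middle step: rigorously tying a set-membership transition across the $\mathcal{S}/\mathcal{E}$ boundary to an actual flip of the predicted \emph{sign}, in the stated direction. The delicacy is that $\mathcal{E}$ contains both misclassified points and correctly-classified margin violators, so membership alone does not determine $\mathrm{sgn}(\rho_{j^\star})$. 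I would therefore have to argue, using the monotone motion of $\rho_{j^\star}$ as $\epsilon$ varies (constant $\mu_{j^\star}=C$ while $j^\star\in\mathcal{E}$, and the linear update $\mu_{j^\star}^{\epsilon+\Delta\epsilon}=\mu_{j^\star}^\epsilon+r_{j^\star}\Delta\epsilon$ while $j^\star\in\mathcal{S}$), that under the hypotheses singling out these four cases the relevant boundary crossing is precisely the label change attributable to $j^\star$, and I would need to dispose of the genericity assumption isolating a single transitioning index. By comparison, the learner coordinate and the bookkeeping that the untouched group's utility is invariant are routine.
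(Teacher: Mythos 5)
Your proposal is correct and follows essentially the same route as the paper's proof: the same four-case analysis of the single transitioning index at the neighboring breakpoints $b_L$ and $b_U$, the same bookkeeping that only the transitioning point's group utility changes by one positive classification while the other group's utility is invariant, and the same shadow-price reasoning ($\gamma > 0$ when the constraint binds) for the learner's coordinate. The one obstacle you flag---that membership in $\mathcal{E}$ does not by itself determine $\mathrm{sgn}\bigl(y_j(\bm{\theta}^\intercal \bx_j + b)\bigr)$, since $\mathcal{E}$ also contains correctly classified margin violators---is resolved in the paper not by your proposed monotone-margin argument but by convention: group welfare is \emph{defined} (see Algorithm 3 in the Appendix, which counts $\mu_i = C$ as a misclassification) so that a point in $\mathcal{E}$ receives label $-y_i$, making the $\mathcal{S}/\mathcal{E}$ transition definitionally the label change, and the paper likewise implicitly assumes your genericity condition that exactly one index attains the breakpoint (``since no other points transition'').
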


Since $\epsilon$ breakpoints allow the comparison of group utilities that result from various $\epsilon$-fair SVM solutions, we can track the solution paths of the $\mu_i(\epsilon)$ for all individuals $i$ in a group $z$ in order to construct a single curve of group $z$'s welfare that is also parameterized by the fairness tolerance level $\epsilon$. Algorithm ~\ref{full-alg2} and Algorithm ~\ref{full-alg3} (in the Appendix) gives an implementation that constructs solution paths $\mu_i(\epsilon)$ and outputs the curves tracking group welfare.

\begin{algorithm}[t]
	\SetAlgoNoLine
	\KwIn{set $\mathcal{X}$ of $n$ data points $\{\bx_i, z_i, y_i\}$}
	\KwOut{solutions paths $\bm{\mu}(\epsilon)$ and group welfare curves $\{W_0(\epsilon), W_1(\epsilon)\}$}
	$\bm{\mu}^0$ =  $\argmin_{\bm{\mu}} D(\bm{\mu})$ of (0-fair SVM-D)\;
	$\mathcal{F} = \emptyset$, $\mathcal{S} = \emptyset$, $\mathcal{E} = \emptyset$\;
	$\epsilon = 0$, $\Delta \epsilon = 0$\;
	\While{$\epsilon < 1$}{
		\For{each $\mu^\epsilon_i$}{
			update $\mathcal{F}, \mathcal{S}, \mathcal{E}$ according to (\ref{free}), (\ref{support}), (\ref{error})
			}
		compute $\bm{r}$, $\bm{d}$ according to (\ref{r}), (\ref{d})\;
		$\Delta \epsilon = \min_i M_i$ as given in (\ref{range-stable})\;
		$\mu_i^{\epsilon+\Delta \epsilon} = \mu_i^\epsilon +r_i \Delta \epsilon $ for $i\in \mathcal{S}$, $\mu_i^\epsilon = \mu_i^{\epsilon+\Delta \epsilon}$ for $i\in \mathcal{F}, \mathcal{E}$\;
		$\epsilon = \epsilon + \Delta \epsilon$\;
		$\{W_0(\epsilon), W_1(\epsilon)\} = gp\_welfare(\mu, \by,\bz)$ \tcp*{calls Algorithm 3 in Appendix to compute group welfare}
		return $(\epsilon, \bm{\mu})$
	}
	\caption{Sensitivity Analysis of Group Welfares to Changing $\epsilon$}
	\label{full-alg2}
\end{algorithm}

We thus find that minimizing loss in the presence of stricter fairness constraints does not correspond to monotonic gains or losses in the welfare level of candidate groups. As a result, fairness perturbations do not have a straightforwardly predictable effect on classification decisions at all! Further, these results do not only arise as an unfortunate outcome of using the particular proxy fairness constraint suggested by Zafar et al \cite{zafar2015fairness}. In fact, so long as the $\epsilon$ parameter appears in the linear part of the dual soft-margin SVM objective function, the $\bm{\mu}$ paths will exhibit a piecewise linear form that can be similarly characterized by stable regions and breakpoints. Thus these results apply to many of the risk minimization programs subject to proxy fairness criteria that have been proposed in the literature \cite{donini2018empirical,woodworth2017learning,zafar2015fairness}. Further, even when the dual variable paths $\mu_i(\epsilon)$ are not piecewise linear, so long as they are non-monotonic, fairer classification outcomes do not necessarily confer welfare benefits to the disadvantaged group.\\

The preceding analyses show that although fairness constraints are often intended to improve classification outcomes for some disadvantaged group, they in general do not abide by the Pareto Principle, a common welfare economic axiom for deciding among social alternatives. That is, asking that an algorithmic procedure abide by a more stringent fairness criteria can lead to enacting classification schemes that actually make every stakeholder group worse-off, both social groups as well as the learner. Here, the supposed ``improved fairness'' achieved by decreasing the unfairness tolerance parameter $\epsilon$ fails to translate into any meaningful improvements in the number of desirable outcomes issued to members of either group. 

\begin{theorem}
\label{pareto-thm}
Consider two fairness-constrained ERM programs parameterized by $\epsilon_1$ and $\epsilon_2$ where $\epsilon_1 < \epsilon_2$. Then a decision-maker who always prefers the classification outcomes issued under the ``more fair'' $\epsilon_1$-fair solution to those under the ``less fair'' $\epsilon_2$-fair solution does not abide by the Pareto Principle. 
\end{theorem}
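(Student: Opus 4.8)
The plan is to show that the policy ``always prefer the more fair classifier'' conflicts with the Pareto Principle of Definition~\ref{Pareto} on at least one instance, which is all that is required to conclude that the policy does not abide by the principle in general. Recall that to \emph{violate} the Pareto Principle one must exhibit two social alternatives $x$ and $y$ together with stakeholder orderings such that $x \succeq_i y$ for every $i$ yet the Planner fails to rank $x \succeq_P y$. Here the two alternatives are the classification outcomes issued under the less fair solution ($x$, parameterized by $\epsilon_2$) and under the more fair solution ($y$, parameterized by $\epsilon_1 < \epsilon_2$), and the relevant stakeholders are the learner---whose utility is decreasing in the loss $p$---together with the two groups, whose utilities are $W_0$ and $W_1$.

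First I would dispatch the learner's coordinate, which holds unconditionally: since the $\epsilon_1$-fair program is the $\epsilon_2$-fair program with a strictly tighter constraint, and since we work in the regime where the fairness constraint binds ($\gamma > 0$), the optimal loss obeys $p(\epsilon_2) \le p(\epsilon_1)$, so the learner weakly prefers the less fair outcome $x$. It then remains only to handle the two group-welfare coordinates, and for these I would invoke Proposition~\ref{compare-utility} directly. Setting $\epsilon_2 = \epsilon$ and $\epsilon_1 = \epsilon + b_L$ with the neighboring lower breakpoint $b_L = \max_j m_j < 0$, the hypotheses of Proposition~\ref{compare-utility}---for instance $b_L = g_j/d_j$ with $j \in \mathcal{E}^\epsilon$, $y_j = -1$, or $b_L = (C-\mu_j^\epsilon)/r_j$ with $j \in \mathcal{S}^\epsilon$, $y_j = +1$---furnish a transition at which the whole triple is \emph{strictly} Pareto dominated, namely $\{p(\epsilon_1), W_0(\epsilon_1), W_1(\epsilon_1)\} \prec \{p(\epsilon_2), W_0(\epsilon_2), W_1(\epsilon_2)\}$. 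Hence every stakeholder strictly prefers the less fair alternative $x$ to the more fair alternative $y$.

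With this instance in hand the theorem follows by pure logic. We have $x \succ_i y$, and a fortiori $x \succeq_i y$, for all $i$, so any Planner abiding by the Pareto Principle is required to rank $x \succeq_P y$. But a decision-maker who \emph{always} prefers the more fair classifier ranks $y \succ_P x$, which contradicts $x \succeq_P y$; therefore such a decision-maker violates the Pareto Principle. If one wants only the weak conclusion, Corollary~\ref{corollary-stable} already supplies a $\Delta\epsilon < 0$ for which the less fair triple weakly dominates the more fair one, and the same logic applies, since the Planner would be forced to rank $x \succeq_P y$ while the policy insists on $y \succ_P x$.

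The main obstacle is establishing that this conflict is \emph{non-vacuous}: Proposition~\ref{compare-utility} provides only sufficient conditions on the breakpoint, so one must argue that there genuinely exists a dataset, a regularization parameter $C$, and a tolerance $\epsilon$ at which the constraint binds and at which a qualifying lower breakpoint of the stated sign pattern occurs. I would address this in two complementary ways. Conceptually, the piecewise-linear solution-path structure of Lemma~\ref{lemma-piecewise} guarantees that transitions of each of the four enumerated types arise as $\epsilon$ is swept, and the sign analysis underlying Proposition~\ref{compare-utility} shows that the type improving welfare for both groups is among them whenever the margin set $\mathcal{S}$ reshuffles in the indicated direction. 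Empirically, the demonstration on the Adult dataset exhibits an explicit pair $\epsilon_1 < \epsilon_2$ realizing the strict domination, certifying that the hypotheses are satisfiable and that the theorem is not vacuous.
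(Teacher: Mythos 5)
Your proposal is correct and takes essentially the same route as the paper: the paper's own proof is a one-line citation of Proposition~\ref{stable-perturbations}, Corollary~\ref{corollary-stable}, Lemma~\ref{lemma-piecewise}, and Proposition~\ref{compare-utility}, and your argument simply assembles those same four ingredients explicitly (learner coordinate from the binding constraint, group coordinates from the breakpoint transitions, then the purely logical contradiction with Definition~\ref{Pareto}). Your added discussion of non-vacuousness, resolved via the piecewise-linear path structure and the Adult dataset demonstration, is more careful than the paper's terse citation but consistent with how the paper itself certifies existence.
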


\begin{figure}[ht]
\centering
\includegraphics[scale=0.75]{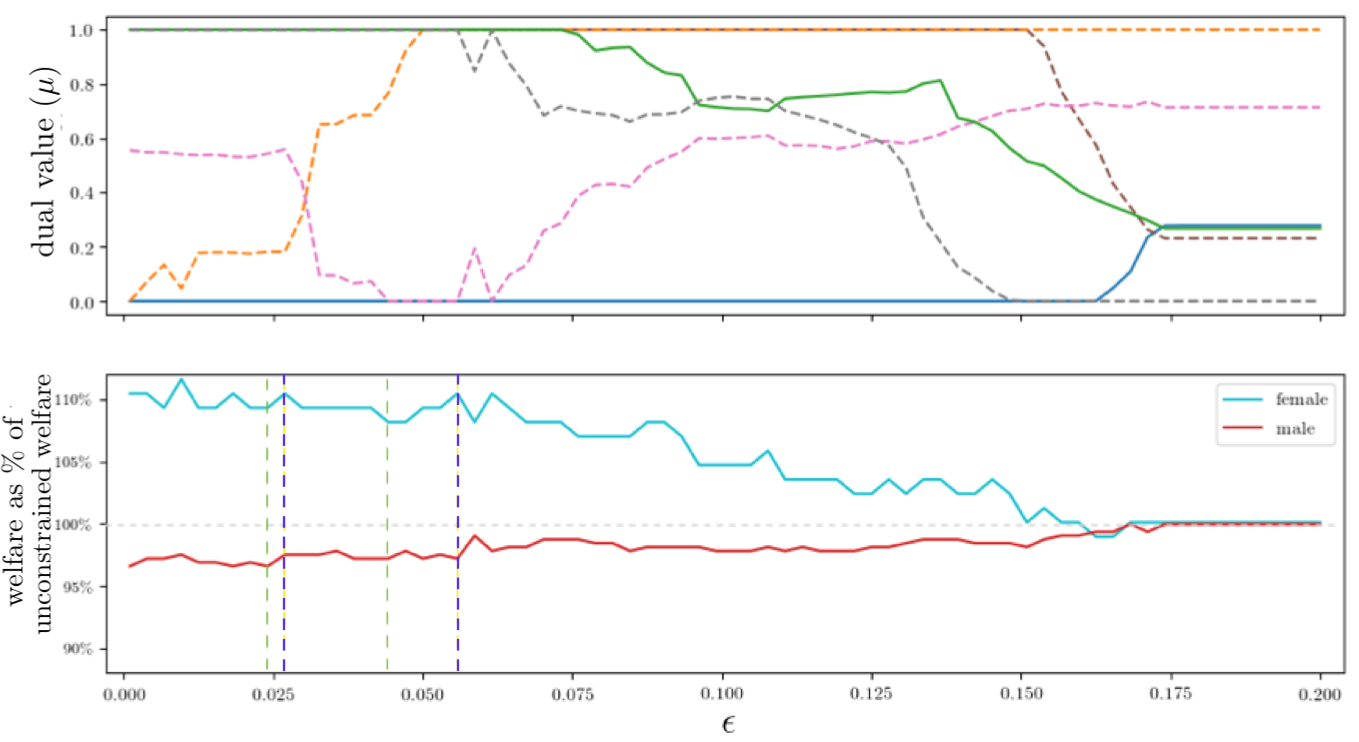}
\caption{Sensitivity analysis of $\epsilon$-fair SVM-solution on Adult dataset. Increasing $\epsilon$ from left to right loosens fairness constraint, and classification outcomes become ``less fair.'' Paths level off at $\epsilon \approx 0.175$ when constraint ceases to bind at the optimal solution. \textbf{Top Panel}: Example ``paths'' of dual variables $\mu_i$ as a function of $\epsilon$. For $\mu_i = 0$, $i\in \mathcal{F}$ (correctly labeled); for $\mu_i \in (0,1)$, $i \in \mathcal{S}$ (correctly labeled, in margin); for $\mu_i = 1$, $i\in \mathcal{E}$ (incorrectly labeled). Paths are piecewise linear, show changes in classification of $\bx_i$ as $\epsilon$ changes. Solid paths are coded female; dotted paths are coded male. \textbf{Bottom Panel}: Relative group-specific welfare change at $\epsilon$-fair SVM solution given as percentage of group welfare at the unconstrained SVM solution. Corresponding plot of absolute welfare changes is given in the Appendix. Non-monotonicity of welfare curves shows that always preferring more fair solutions does not abide by the Pareto Principle. Green dashed vertical lines to the left of purple ones give examples of classifications that are Pareto-dominated but ``more fair.''}
\end{figure}

\subsection{Sensitivity Analysis on Learner Optimal Value}
Having proven the main welfare-relevant sensitivity result for groups, we return to conduct more standard analysis of the effect of $\Delta \epsilon$ perturbations on the learner's loss. Recall that in this case, we directly inquire the Dual variable of the fairness constraint. Solving for $\gamma^*$ in (\ref{SVM2-D}) yields
\begin{align}
\label{shadow-price-fairness}
    \gamma^* = \frac{n(n(\beta_--\beta_+)+ \sum_{i=1}^n \mu_i y_i \langle \bx_i, \bu \rangle)}{\norm{\bu}^2}
\end{align}
By complementary slackness, one of $\beta_-$ and $\beta_+$ is zero, while the other is equal to $\epsilon$. In particular, if $\beta_- = 0$, then $\beta_+ = \epsilon$, and we know that $\gamma = V > 0$. Thus the original fairness constraint that binds is the upper bound on covariance, suggesting that the optimal classifier must be constrained to limit its positive covariance with group $z = 1$. Similarly, if $\beta_+ = 0$, then $\gamma = -V < 0 $, and the optimal classifier must be constrained to limit its positive covariance with group $z=0$. 
\ignore{
\begin{align*}
\gamma > 0 &\longrightarrow  \sum_{i=1}^n \mu_i y_i \langle \bx_i, \bu\rangle > n \epsilon \\
\gamma < 0 &\longrightarrow  \sum_{i=1}^n \mu_i y_i \langle \bx_i, \bu\rangle > -n \epsilon
\end{align*}
}

We can interpret the value of the Dual variable Lagrange multiplier $\gamma^*$ given in (\ref{shadow-price-fairness}) as the shadow price of the fairness constraint. It gives the additional loss in accuracy that the learner would achieve if the fairness constraint were infinitesimally loosened. Whenever a fairness constraint binds, its shadow price is readily computable and is given by 
\begin{align}
\label{shadow-price}
    \abs{\gamma} = \frac{n\abs{n\epsilon + \sum_{i=1}^n \mu_i y_i \langle \bx_i, \bu \rangle}}{\norm{\bu}^2}
\end{align}

It bears noting that because (\ref{original-fair-ERM}) is not a linear program, (\ref{shadow-price}) can onl be interpreted as a measure of \textit{local} sensitivity, valid only in a small neighborhood around an optimal solution. But through an alternative lens of sensitivity analysis, we can derive a lower bound on global sensitivity due to changes in the fairness tolerance parameter $\epsilon$. By writing $\epsilon$ as a perturbation variable, we can perform sensitivity analysis on the same $\epsilon$-constrained problem. Returning to the perturbation function $p(\epsilon)$, we have
\begin{align}
\label{perturb}
p(\epsilon) \ge \sup_{\bm{\mu}, \gamma}\{\mathcal{L}(\bm{\mu}^*, \gamma^*) - \epsilon \abs{\gamma^*}\}
\end{align}
where $\mathcal{L}(\bm{\mu}^*, \gamma^*)$ gives the optimal solution to the SVM problem with $\epsilon = 0$:
\begin{align}
\mathcal{L}(\bm{\mu}^*, \gamma^*) = \max_{\bm{\mu} \in [0,C]^n, \gamma} -\frac{1}{2} \norm{\sum_{i=1}^n \mu_i y_i (I - P_u)\bx_i}^2+ \sum_{i=1} \mu_i
\end{align}
The perturbation formulation given in (\ref{perturb}) is identical in form to the original program ($\epsilon$-fair-SVM1-P) but gives a global bound on $p(\epsilon)$ for all $\epsilon \in [0,1]$. Since (\ref{perturb}) gives a lower bound, the global sensitivity bound yields an asymmetric interpretation.   

\begin{proposition}
\label{global-prop}
If $\Delta \epsilon < 0$ and $\abs{\gamma^*} \gg 0$, then $p(\epsilon + \Delta \epsilon) - p(\epsilon) \gg 0$. If $\Delta \epsilon > 0$ and $\abs{\gamma^*}$ is small, then $p(\epsilon + \Delta \epsilon) - p(\epsilon) < 0$ but small in magnitude.
\end{proposition}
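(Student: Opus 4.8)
The plan is to read both claims off a single global linear lower bound on the value function, obtained from convexity and duality. First I would record two structural facts. Since $p(\epsilon)$ is the optimal value of the convex quadratic program ($\epsilon$-fair-SVM1-P) in which $\epsilon$ enters only as the right-hand side of the fairness constraint, $p$ is convex in $\epsilon$ (the perturbation function of a convex program is convex in the perturbation) and nonincreasing (relaxing a constraint cannot raise a minimum). Second, I would identify $\abs{\gamma^*}$, the shadow price computed in (\ref{shadow-price}), as the magnitude of a subgradient of $p$ at $\epsilon$; this is legitimate because in the dual objective $\mathcal{L}_D$ the parameter $\epsilon$ appears only through the linear term $-\abs{\gamma}\epsilon$, so the sensitivity relation $p'(\epsilon) = -\abs{\gamma^*} \le 0$ holds.

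The engine is the inequality (\ref{perturb}), which I would re-derive at the operating point $\epsilon$ rather than at $0$. The dual-feasible region $\{\bm{\mu}\in[0,C]^n,\ \sum_i\mu_i y_i = 0,\ \abs{\gamma}\le V\}$ does not involve $\epsilon$, and the quadratic part of the dual objective is $\epsilon$-independent. Hence plugging the $\epsilon$-optimal dual point $(\bm{\mu}^*,\gamma^*)$ into the dual objective evaluated at $\epsilon+\Delta\epsilon$ changes only $-\abs{\gamma^*}\epsilon$ into $-\abs{\gamma^*}(\epsilon+\Delta\epsilon)$, so weak duality at $\epsilon+\Delta\epsilon$ together with strong duality at $\epsilon$ yields the global supporting-hyperplane bound
\begin{align}
p(\epsilon + \Delta\epsilon) - p(\epsilon) \ge -\abs{\gamma^*}\,\Delta\epsilon, \label{plan-hyperplane}
\end{align}
valid for every $\Delta\epsilon$ with $\epsilon+\Delta\epsilon\in[0,1]$. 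This is the convexity statement in disguise and it drives both parts.

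For the first claim, take $\Delta\epsilon < 0$: the right-hand side of (\ref{plan-hyperplane}) equals $\abs{\gamma^*}\abs{\Delta\epsilon}\ge 0$, so tightening raises the learner's loss by at least $\abs{\gamma^*}\abs{\Delta\epsilon}$, which is $\gg 0$ when $\abs{\gamma^*}\gg 0$. The point to stress is that because (\ref{plan-hyperplane}) is a genuine lower bound it certifies this increase \emph{globally}, not merely near $\epsilon$. For the second claim, take $\Delta\epsilon > 0$: monotonicity of $p$ gives $p(\epsilon+\Delta\epsilon)-p(\epsilon)\le 0$, strict as long as the constraint still binds (so $\abs{\gamma^*}>0$), while (\ref{plan-hyperplane}) gives $p(\epsilon+\Delta\epsilon)-p(\epsilon)\ge -\abs{\gamma^*}\Delta\epsilon$. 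The change is therefore sandwiched in $[-\abs{\gamma^*}\Delta\epsilon,\,0]$, so its magnitude is at most $\abs{\gamma^*}\Delta\epsilon$, which is small whenever $\abs{\gamma^*}$ is small. This is exactly the asymmetry flagged before the statement: a lower bound pins down how large the increase from tightening must be, but for loosening it can only cap how small the decrease is.

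The main obstacle is making the shadow-price reading rigorous at the nondifferentiable points of $p$. Because $\bm{\mu}(\epsilon)$ is only piecewise linear (Lemma \ref{lemma-piecewise}), $p$ is piecewise quadratic with kinks at the breakpoints, so $\abs{\gamma^*}$ must be interpreted as a one-sided derivative (subgradient) there, and one must verify that (\ref{plan-hyperplane}) uses the correct one-sided value in the direction of $\Delta\epsilon$. A secondary subtlety is justifying the \emph{strict} decrease in the second claim, which requires the constraint to be active and $p$ locally strictly decreasing; this holds precisely when $\gamma^*\neq 0$, i.e. under the standing assumption that the fairness constraint binds. Everything else reduces to the convexity and monotonicity of the value function, both standard for the perturbation function of a convex program.
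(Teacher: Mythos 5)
Your proposal is correct and takes essentially the same route as the paper's proof: both rest on the global affine lower bound on the perturbation function obtained from duality, $p(\epsilon+\Delta\epsilon) \ge p(\epsilon) - \abs{\gamma^*}\Delta\epsilon$, read asymmetrically so that tightening certifies a large loss increase when $\abs{\gamma^*} \gg 0$ while loosening only caps the decrease. Your version is in fact slightly more careful than the paper's, which anchors the bound at $\epsilon = 0$ and leaves implicit the monotonicity of $p$ needed for the sign of the change in the second claim, both of which you supply explicitly along with the subgradient caveat at breakpoints.
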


Proposition \ref{global-prop} reveals that tightening the fairness constraint when the shadow price of the fairness constraint is high leads to a great increase in vendor loss, but loosening the fairness constraint when the shadow price is small leads only to a small decrease in loss. 

\section{Discussion}
As algorithmic systems increasingly make life-shaping social and economic decisions, researchers in machine learning must reevaluate both their lodestars of optimality and efficiency as well as their latest metrics of fairness. Since notions of fairness are invariably context-dependent and always informed by background normative views, it is unsurprising that there has been such wide disagreement within the community about which of the many fairness definitions is the ``right'' one. In reality, the search space is much larger, and there is no objective winner. 

This paper does not look to offer another fairness definition. Instead, by viewing classification outcomes as allocations of a good, we incorporate considerations of individual and group utility in our analysis of classification regimes. The role of the concept of ``utility'' in evaluations of social policy has been controversial since Bentham popularized the notion over 200 years ago. But in many cases of social distribution, utility considerations provide a partial but still important perspective on what is at stake within a task of distribution. An individual needs various social and economic resources throughout the course of her life; utility-based notions of welfare can capture the relative benefit that a particular good can have on a particular individual. If machine learning systems are in effect serving as resource distribution mechanisms, then questions about fairness should align with questions of ``Who benefits?'' Our results show that many parity-based formulations of fairness in machine learning do not ensure that disadvantaged groups benefit. Working to ensure that a classifier better accords with a fairness measure can lead to selecting allocations that lower the welfare for every group (as well as the learner). There are several reasons that favor limiting levels of inequality that are not reflected in utilitarian calculus, but without acknowledging and accounting for these reasons, well-intentioned optimization tasks that seek to be ``fairer'' can further disadvantage social groups for no reason but to satisfy a given fairness metric. 

We propose that the field of algorithmic fairness look to work in welfare economics for both specific insights into formalizing substantive notions of fairness in distribution and also general insights into how to build a ``technical'' field and methodology that more effectively grapples with normative questions. Welfare economics exists as a branch of economics that is explicitly concerned with what public policies \textit{ought} to be, how to maximize individuals' well-beings, and what types of distributive outcomes are preferable. Answers to these questions appeal to values and judgments that do not refer only to descriptive or predictive facts about a state of affairs. It would appear that the success of fair machine will largely hang on how well it can adapt to a similar ambitious task. 

\bibliographystyle{unsrt}
\bibliography{welfare-pareto}
\pagebreak
\section{Appendix}

\subsection{Dual derivations of the $\epsilon$-fair SVM program}
In this Appendix section, we walk through the preliminary setup of the $\epsilon$-fair SVM program given in Section 5.1 and present intermediate derivations omitted from the main text. 

Recall that the fair empirical risk minimization program of central focus is
\begin{mini*}
  {\bm{\theta}, b}{ \frac{1}{2}\norm{\bm{\theta}}^2 + C\sum_{i=1}^n \xi_i }{}{} 
  \label{original-fair-ERM}
  \addConstraint{y_i(\bm{\theta}^\intercal \bx_i + b) - 1 +\xi_i}{\geq 0} \tag{$\epsilon$-fair Soft-SVM}
\addConstraint{\xi_i}{ \geq 0}
\addConstraint{f_{\bm{\theta}, b}(\bx, y)}{ \leq \epsilon}
\end{mini*}
The linear hyperplane parameters are $\bm{\theta} \in \mathbb{R}^d$ and $b\in \mathbb{R}$. The non-negative $\xi_i$ allow the margin constraints to have some slack---this is why these variables are commonly called ``slack variables.'' In the Soft-Margin (as opposed to the Hard-Margin) SVM, the margin is permitted to be less than 1. A slack variable $\xi_i > 0$ corresponds to a point $\bx_i$ having a functional margin of less than 1. There is a cost associated with this margin violation, even though it need not correspond to a classification error. $C > 0$ is a hyperparameter tunable by the learner to optimize this trade-off between preferring a larger margin and penalizing violations of the margin. 

When we combine the general Soft-Margin SVM with the the covariance constraint in (\ref{covariance}) proposed by Zafar et al. \cite{zafar2015fairness}, we have the program
\begin{mini*}
  {\bm{\theta}, b}{ \frac{1}{2}\norm{\bm{\theta}}^2 + C\sum_{i=1}^n \xi_i }{}{}
  \label{fair-SVM}
  \addConstraint{y_i(\bm{\theta}^\intercal \bx_i + b) - 1 +\xi}{\geq 0} \tag{$\epsilon$-fair-SVM1-P}
\addConstraint{\abs{\frac{1}{n}\sum_{i=1}^n (z_i -\bar{z})(\bm{\theta}^\intercal\bx_i+ b)}}{ \leq \epsilon}
\end{mini*}
\noindent where $\bar{z}$ reflects the bias in the demographic makeup of $\mathcal{X}$: $\bar{z} = \frac{1}{n} \sum_{i=1}^n z_i$. The corresponding Lagrangian is
\begin{align}
\label{lagrangian-primal}
\mathcal{L}_P(\bm{\theta},b, \bm{\xi},\bm{\lambda}, \bm{\mu},\gamma_1, \gamma_2)  &= \frac{1}{2} \norm{\bm{\theta}}^2 + C\sum_{i=1}^n \xi_i- \sum_{i=1}^n \lambda_i - \sum_{i=1}^n \mu_i (y_i (\bm{\theta}^\intercal \bx_i + b)- 1 +{\xi_i}) \tag{$\epsilon$-fair-SVM1-L}
\\
&- \gamma_1\big(\epsilon - \frac{1}{n}\sum_{i=1}^n(z_i - \bar{z})(\bm{\theta}^\intercal \bx_i+b)\big) - \gamma_2\big(\epsilon - \frac{1}{n}\sum_{i=1}^n(\bar{z} - z_i)(\bm{\theta}^\intercal \bx_i + b)\big) \nonumber
\end{align}
where $\bm{\theta} \in \mathbb{R}^d, b\in \mathbb{R}, \bm{\xi} \in \mathbb{R}^n$ are Primal variables. The (non-negative) Lagrange multipliers $\bm{\lambda},\bm{\mu} \in \mathbb{R}^n$ correspond to the $n$ non-negativity constraints $\xi_i \ge 0$ and the margin-slack constraints $y_i(\bm{\theta}^\intercal \bx_i + b) - 1 +\xi_i \ge 0$ respectively. The multiplier $\mu_i$ relays information about the functional margin of its corresponding point $\bx_i$. If the margin is greater than 1 in the Primal, \textit{i.e.}, there is slack in the constraint), then by complementary slackness, $\mu_i = 0$. Otherwise, if the constraint holds with equality, $\mu_i \in (0, C]$. When the classifier commits an error on $\bx_i$, $y_i(\bm{\theta}^\intercal \bx_i + b) \le$, and then by the KKT conditions, $\mu_i = C$. 

The multipliers $\gamma_1, \gamma_2 \in \mathbb{R}$ correspond to the two linearized forms of the absolute value fairness constraint. Notice that these two constraints cannot simultaneously hold with equality for $\epsilon >0$. Thus, by complementary slackness again, we know that at least one of $\gamma_1, \gamma_2$ is zero, and the other is strictly positive. 

By the Karush-Kuhn-Tucker conditions, at the solution of the convex program, the gradients of $\mathcal{L}$ with respect to $\bm{\theta}$, $b$, and ${\xi_i}$ are zero:
\begin{align*} \frac{\partial \mathcal{L}}{\partial \bm{\theta}} &\coloneqq 0 \Rightarrow \bm{\theta} = \sum_{i=1}^n \mu_i y_i \bx_i - \frac{\gamma}{n}(\sum_{i=1}^n (z_i - \bar{z}) \bx_i)
\\
\frac{\partial \mathcal{L}}{\partial b} &\coloneqq 0 \Rightarrow \sum_{i=1}^n \mu_i y_i =  \frac{\gamma}{n} \sum_{i=1}^n (z_i - \bar{z}) = 0
\\
\frac{\partial \mathcal{L}}{\partial \xi_i} &\coloneqq 0 \Rightarrow \lambda_i  + \mu_i = C, \qquad  i=1, \ldots ,n \end{align*}
Plugging in these conditions, the Dual Lagrangian is
\begin{align}
\label{lagrangian-dual}
\mathcal{L}_D(\bm{\theta},\bm{\xi},\bm{\lambda}, \bm{\mu},\gamma_1, \gamma_2)  &= -\frac{1}{2} \norm{\sum_{i=1}^n \mu_i y_i \bx_i - \frac{\gamma}{n}\sum_{i=1}^n (z_i - \bar{z}) \bx_i}^2 +\sum_{i=1}^n \mu_i - \abs{\gamma}\epsilon
\end{align}
where $\gamma = \gamma_1 - \gamma_2$. Thus the Dual maximizes this objective subject to the constraints $\mu_i \in [0,C]$ for all $i$ and $\sum_{i=1}\mu_i y_i = 0$. We thus derive the full Dual problem 
\begin{maxi*}
  {\bm{\mu}, \gamma}{-\frac{1}{2}\norm{\sum_{i=1}^n \mu_i y_i \bx_i - \frac{\gamma}{n}\sum_{i=1}^n (z_i - \bar{z}) \bx_i}^2 +\sum_{i=1}^n \mu_i - V\epsilon}{}{}
 {\label{dual-SVM-fair}}
\addConstraint{\mu_i}{\in [0,C],}{\qquad i=1, \ldots ,n} \tag{$\epsilon$-fair-SVM1-D}
\addConstraint{\sum_{i=1}^n \mu_i y_i}{= 0}
\addConstraint{\gamma}{\in [-V,V]}
\end{maxi*}
where we have introduced the variable $V$ to eliminate the absolute value function $\abs{\gamma}$ in the objective. 
Notice that when $\gamma = 0$ and neither of the fairness constraints bind, we recover the standard dual SVM program. Since we are concerned with fairness constraints that alter an optimal solution, we are interested in cases in which $V$ is strictly positive. As such, we can rewrite the preceding by plugging in the optimal $\gamma^*$ as given in (\ref{shadow-price-fairness}):
\begin{align*}
    \gamma^* = \frac{n(n(\beta_--\beta_+)+ \sum_{i=1}^n \mu_i y_i \langle \bx_i, \bu \rangle)}{\norm{\bu}^2}
\end{align*}
\ignore{
\begin{maxi*}
 {\substack{\bm{\mu}, \sigma, \beta_-, \beta_+,\\ \bm{\alpha_-}, \bm{\alpha_+}}}{ -\frac{1}{2}\norm{\sum_{i=1}^n \mu_i y_i (I - P_\bu)\bx_i}^2+\sum_{i=1}^n (1+\alpha_{-,i} - \alpha_{+,i} + \sigma y_i) \mu_i }{}{}
 \breakObjective{\qquad +\frac{N\sum_{i}\mu_i y_i \langle \bx_i, \bu \rangle}{\norm{\bu}^2}(\beta_- - \beta_+) + \epsilon(2\beta_-)- C\sum_{i=1}^n \alpha_{-,i}} 
\addConstraint{\alpha_i, \alpha_+, \beta_-, \beta_+, \sigma}{\geq 0}{\qquad i=1, \ldots ,n} \tag{$\epsilon$-fair SVM2-D}
\label{SVM2-D}
\addConstraint{\beta_- + \beta_+}{= \epsilon}
\end{maxi*}
}
Thus we can write (\ref{dual-SVM-fair}) as
\begin{maxi*}
 {\substack{\bm{\mu}, \beta_-, \beta_+}}{ -\frac{1}{2}\norm{\sum_{i=1}^n \mu_i y_i (I - P_\bu)\bx_i}^2+\sum_{i=1}^n \mu_i +\frac{2n\sum_{i}\mu_i y_i \langle \bx_i, \bu \rangle + n^2(\beta_- - \beta_+)}{2\norm{\bu}^2}(\beta_- -\beta_+)}{}{}
\addConstraint{\mu_i}{\in [0,C],}{\qquad i=1, \ldots ,n} \tag{$\epsilon$-fair SVM2-D}
\label{SVM2-D}
\addConstraint{\sum_{i=1}^n \mu_i y_i}{= 0}
\addConstraint{\beta_-, \beta_+}{ \geq 0}
\addConstraint{\beta_- + \beta_+}{ = \epsilon}
\end{maxi*}
where $I, P_\bu \in \mathbb{R}^{d\times d}$. The former is the identity matrix, and the latter is the projection matrix onto the vector defined by $\bu = \sum_{i=1}^n (z_i - \bar{z})\bx_i$. As was also observed by Donini et al., the $\epsilon = 0$ version of (\ref{SVM2-D}) is thus equivalent to the standard formulation of the dual SVM program with Kernel $K(\bx_i, \bx_j) = \langle (I-P_\bu)\bx_i, (I-P_\bu)\bx_j \rangle$ \cite{donini2018empirical}.

\subsection{Algorithms}

\begin{algorithm}[h]
	\SetAlgoNoLine
	\KwIn{dual variables $\bm{\mu}$, true labels $\by$, group memberships $\bz$}
	\KwOut{group welfares $\{W_0, W_1\}$}
		$W_{0} = 0$\;
		$W_{1} = 0$\;
		$\abs{n_0} = \sum_{i=1}^n \mathds{1}[z_i = 0]$\;
		$\abs{n_1} = \sum_{i=1}^n \mathds{1}[z_i = 1]$\;
		\For{each $\mu_i$}{	
			\If{($\mu_i < C$ \& $y_i = 1)$ $||$ $(\mu_i = C$ \& $y_i = 0)$}{
				$W_{z_i} = W_{z_i} + 1$\;
			}
		}
		return $(\frac{W_0}{n_0}, \frac{W_1}{n_1})$
	\caption{Compute Group Welfares}
	\label{full-alg3}
\end{algorithm}

\subsection{Additional Figures}
\begin{figure}[h]
\centering
\includegraphics[scale=0.8]{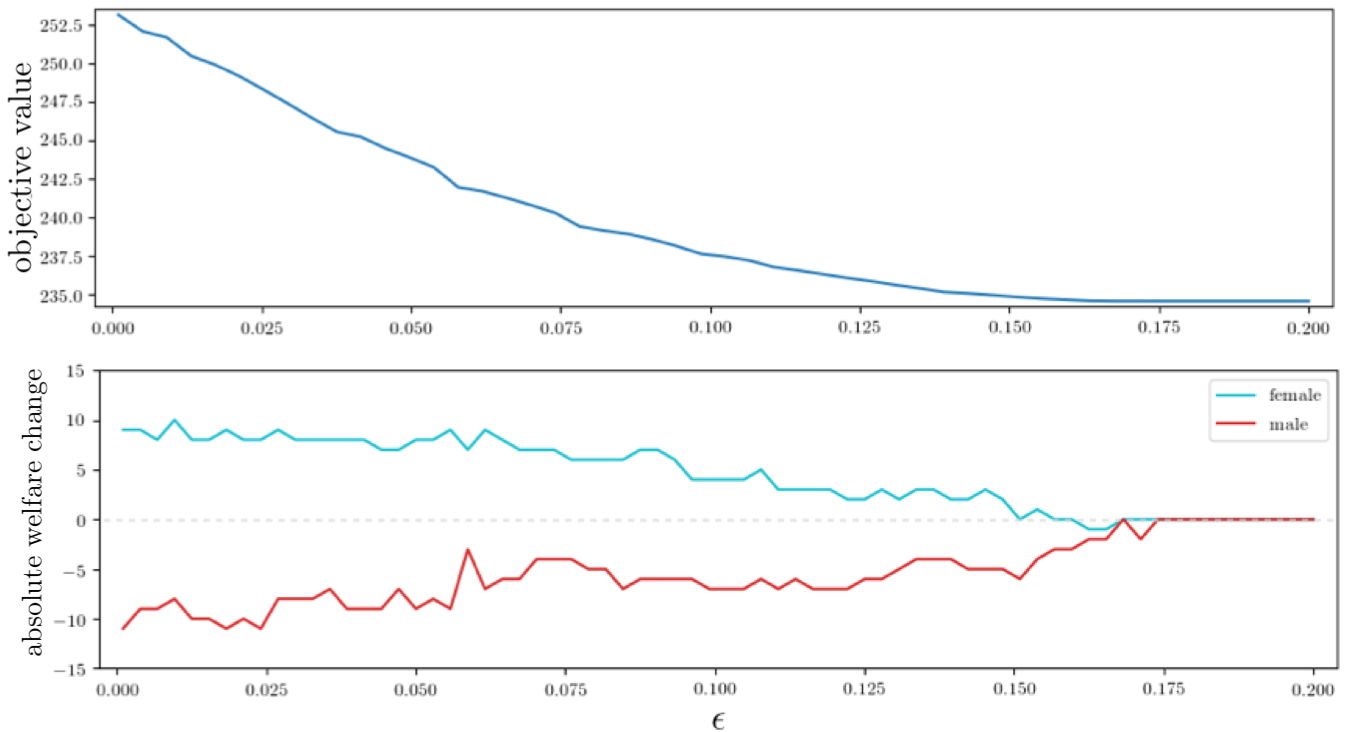}
\caption{Sensitivity analysis of $\epsilon$-fair SVM-solution on Adult dataset. Increasing $\epsilon$ from left to right loosens fairness constraint, and classification outcomes become ``less fair.'' Paths level off at $\epsilon \approx 0.175$ when constraint ceases to bind at the optimal solution.\textbf{Top Panel}: Learner objective value monotonically decreases as fairness constraint loosens. \textbf{Bottom Panel}: Absolute group-specific welfare change at $\epsilon$-fair SVM solution given as absolute change in the number of positively labeled examples compared to the group unconstrained baseline.}
\end{figure}
\subsection{Proofs}
\subsubsection{Proof of Lemma \ref{lemma-convex}}
\begin{proof}
Let $A$ and $B$ be a pair of disjoint non-empty convex sets that partition $\mathcal{X} \subset \mathbb{R}^d$: $A \coprod B = \mathcal{X}$. Then by the hyperplane separation theorem, there exists a pair ($\bm{\theta},b$) such that for all $\bx \in A$, $\bm{\theta}^\intercal \bx \ge b$---call this closed halfspace $\bar{h}^+$---and for all $\bx \in B$, $\bm{\theta}^\intercal \bx \le b$---call this closed halfspace $\bar{h}^-$. One such hyperplane can be constructed to separate the convex hulls of $A$ and $B$
\[ C(A) = \big\{ \sum_{i=1}^{|A|} \alpha_i \bx_i | \bx_i \in A, \alpha_i \ge 0, \sum_{i=1}^{|A|} \alpha_i = 1 \big\} \qquad C(B) =  \big\{ \sum_{i=1}^{|B|} \alpha_i \bx_i | \bx_i \in B, \alpha_i \ge 0, \sum_{i=1}^{|B|} \alpha_i = 1 \big\} \]

Let $h_{V}$ be the $d-1$-dimensional hyperplane defined by the set $V$ with $|V| = d$ such that $V \cap C(A) \neq \emptyset$ and $V \cap C(B) \neq \emptyset$. In order for the hyperplane to separate $C(A)$ and $C(B)$, $h_V$ must also support each hull---we know that such a hyperplane always exists. In order to separate $C(A)$ and $C(B)$ so they are contained within open halfspaces $h_{V}^+$ and $h_{V}^-$, we wiggle the hyperplane so that it no longer passes through vertices $\bv \in V$ but still maintains convex hull separation. This ``wiggle'' step is the final step of separating $A$ and $B$. 

Suppose $V$ can be partitioned into a single vertex $\bv_A$ in $C(A)$ and a set $P = \{ \bv | \bv \in C(B)\}$ with $\abs{P} = d-1$. The set $P$ defines a ridge on $C(B)$, since it is a $d-2$-dimensional facet of $C(B)$. Rotations in $d$-dimensions are precisely defined as being around $d-2$-dimensional planes. Thus pivoting $h_V$ around the ridge $P$ away from $\bv_A$ is a well-defined rotation in $\mathbb{R}^d$. Selecting any infinitesimally small rotation angle $\rho$ will be enough to have $C(A)\in h_V^+$. After the pivot, we translate $h_V$ away from the ridge $P$ back toward $\bv_A$. An infinitesimal translation is sufficient, since we simply wish to dislodge $h_V$ from the ridge $P$, so that $C(B) \in h_V^-$.
\end{proof}
\subsubsection{Proof of Proposition \ref{global-prop}} 
\begin{proof}
Following much of the exposition in the main text, recall we have that the perturbation function in (\ref{perturb}) is given as 
\[p(\epsilon) \ge \sup_{\bm{\mu}, \gamma}\{\mathcal{L}(\bm{\mu}^*, \gamma^*) - \epsilon \abs{\gamma^*}\}\]
which gives a global lower bound. Thus when a perturbation $\Delta \epsilon < 0$ causes $\mathcal{L}(\bm{\mu}^*, \gamma^*) - \epsilon \abs{\gamma^*}$ to increase, then $p(\epsilon + \Delta \epsilon)$ is guaranteed to increase by at least $\Delta \epsilon \abs{\gamma^*}$. Thus when $\abs{\gamma^*} \gg 0$, $p(\epsilon + \Delta \epsilon) -p(\epsilon) \gg 0$. The learner experience a significant increase in her optimal value $p(\epsilon)$ (which she wishes to minimize). 

On the other hand, when $\Delta \epsilon > 0$, then $\mathcal{L}(\bm{\mu}^*, \gamma^*) - \epsilon \abs{\gamma^*}$ decreases. But the decrease gives only the lower bound, and thus when $\abs{\gamma^*}$ is small, her optimal value $p(\epsilon)$ decreases but it is guaranteed not to decrease by much. 
 \end{proof}
 
\subsubsection{Proof of Proposition \ref{stable-perturbations}}
\begin{proof}
For all $j \in \mathcal{F}^\epsilon$, remaining in $\mathcal{F}^{\epsilon + \Delta \epsilon}$ after the perturbation requires that $\frac{\partial D}{\partial \mu_j} >  0$ after the perturbation. Let $\mu_i^\epsilon$ be the optimal $\mu_i$ solution at $p(\epsilon)$. Then following (\ref{dD-dm}), we rewrite the quantity $\frac{\partial D}{\partial \mu_j}$ as
\[g_j = 1 - \Big(\sum_{i=1}^n \mu^\epsilon_i y_i (I-P_\bu)\bx_i y_j (I-P_\bu)\bx_j + \frac{n \epsilon y_j \langle \bx_j, \bu \rangle}{\norm{\bu}^2} + b y_j\Big) < 0\]
If $d_j \Delta \epsilon > 0$, then $j \in \mathcal{F}^{\epsilon + \Delta \epsilon}$. Otherwise, for $d_j \Delta \epsilon < 0$, if ${\Delta \epsilon} < {\frac{g_j}{d_j}}$, then $\frac{\partial D}{\partial \mu_j^{\epsilon +\Delta \epsilon}} > 0$, and $j \in \mathcal{F}^{\epsilon + \Delta \epsilon}$ after the perturbation. \checkmark

\noindent The same reasoning follows for $j \in \mathcal{E}^\epsilon$, except we have that $g_j > 0$. Thus if $d_j \Delta \epsilon < 0$, then $j \in \mathcal{E}^{\epsilon + \Delta \epsilon}$. Otherwise, for $d_j \Delta \epsilon > 0$, if ${\Delta \epsilon} < \frac{g_j}{d_j}$, then $\frac{\partial D}{\partial \mu_j^{\epsilon +\Delta \epsilon}} > 0$, and $j \in \mathcal{E}^{\epsilon + \Delta \epsilon}$ after the perturbation. \checkmark

To ensure that support vectors do not escape the margin, we can directly look to $r_j = \frac{\partial \mu_j}{\partial \epsilon}$. Since for all $j \in \mathcal{S}^\epsilon$, $\mu_j^\epsilon \in [0,C]$, then staying in the margin and set $\mathcal{S}^{\epsilon + \Delta \epsilon}$ depends on the sign of $r_j$ and requires that
\begin{align} 
r_j < 0 &\longrightarrow \frac{C-\mu_j^\epsilon}{r_j}< \Delta \epsilon < \frac{-\mu_j^\epsilon}{r_j}\\
r_j > 0 &\longrightarrow \frac{-\mu^\epsilon_j}{r_j} < \Delta \epsilon < \frac{C-\mu^\epsilon_j}{r_j}
\end{align} 
Thus taking the minimum of the positive quantities gives an upper bound, while taking the maximum of the negative quantities gives a lower bound on $\Delta \epsilon$ perturbations, such that \{$\mathcal{F}, \mathcal{S}, \mathcal{E}\}^\epsilon = \{\mathcal{F}, \mathcal{S}, \mathcal{E}\}^{\epsilon + \Delta \epsilon}$. Let 
\[m_j = \begin{cases} \begin{cases} {\frac{g_j}{d_j}}, & j \in \mathcal{F},  d_j > 0 \\ -\infty, &  j \in \mathcal{F},  d_j < 0 \end{cases} \\ \min \{\frac{C-\mu_j^\epsilon}{r_j}, \frac{-\mu^\epsilon_j}{r_j}\}, & j \in \mathcal{S} \\ \begin{cases} -\infty, & j\in \mathcal{E}, d_j >0 \\ {\frac{g_j}{d_j}}, & j \in \mathcal{E},  d_j  <0 \end{cases} \end{cases}, \qquad M_j = \begin{cases} \begin{cases} \infty, & j \in \mathcal{F},  d_j > 0 \\ {\frac{g_j}{d_j}}, &  j \in \mathcal{F},  d_j < 0 \end{cases} \\ \min \{\frac{C-\mu_j^\epsilon}{r_j}, \frac{-\mu^\epsilon_j}{r_j}\}, & j \in \mathcal{S} \\ \begin{cases} {\frac{g_j}{d_j}}, & j\in \mathcal{E}, d_j >0 \\ \infty, & j \in \mathcal{E},  d_j  <0 \end{cases}\end{cases}  \]
Thus all perturbations of $\epsilon$ within the range\[\Delta \epsilon \in \big(\max_j{m_j}, \min_j{M_j}\big)\] satisfy the necessary conditions to ensure stable sets $\{\mathcal{F},\mathcal{S},\mathcal{E}\}$. Stable classifications $\hat{y}_i$ follow. 
\end{proof}

\subsubsection{Proof of Corollary \ref{corollary-stable}}
\begin{proof}
For all $\Delta \epsilon$ in the stable region given in (\ref{range-stable}), $W_i(\epsilon) = W_i(\epsilon +\Delta \epsilon)$ where $i$ gives group membership $z = i$. Thus the groups are welfare-wise indifferent between classifications at $\epsilon$ and $\Delta \epsilon$. For all $\Delta \epsilon < 0$, where the fairness constraint is tightened,$p(\epsilon) \le p(\epsilon +\Delta \epsilon)$. Since the learner prefers lower loss, we have that $p(\epsilon) \succeq p(\epsilon +\Delta \epsilon)$. Comparing the triples at each $\epsilon$ value, we thus have
\[\{p(\epsilon), W_0(\epsilon), W_1(\epsilon) \} \succeq \{p(\epsilon+\Delta \epsilon), W_0(\epsilon+ \Delta\epsilon), W_1(\epsilon + \Delta \epsilon) \}\]
as desired.
\end{proof}
\subsubsection{Proof of Lemma \ref{lemma-piecewise}}

\begin{proof}
Consider the dual variables $\bm{\mu^{\epsilon_0}}$ at the optimal SVM solution $p(\epsilon_0)$. By Proposition (\ref{stable-perturbations}), for all perturbations $\Delta \epsilon_0 \in (\max_i m_i, \min_i M_i)$, $\mu_i$ for $i\in \mathcal{S}$ change according to (\ref{update-r}), which is clearly linear in $\Delta \epsilon$; for all $i \notin \mathcal{S}$, $\frac{\partial \mu_i}{\partial \epsilon} = 0$, and $\mu_i^{\epsilon_0} = \mu_i^{\epsilon_0 +\Delta \epsilon_0}$. \\\
For $\Delta \epsilon_0$ perturbations beyond this range, at least one vector $\bx_i$ leaves its original set at $\epsilon_0$ and enters another at $\epsilon_0 + \Delta \epsilon_0$. Without loss of generality consider the upper bound to the $\Delta \epsilon_0$ stability region, denoted $b_0 = \min_i M_i$. We want to show that all possible $\mu_i$ paths at this breakpoint are continuous and piecewise linear. Consider $j \in \mathcal{E}^{\epsilon_0}$ moving into $j \in \mathcal{S}^{\epsilon_0 + b_0}$: $\mu_j^{\epsilon_0} = \mu_j^{\epsilon_0 + \Delta \epsilon_0} = C$ for all $\Delta \epsilon_0 \in [\epsilon_0, \epsilon_0+b_0)$. At $\Delta \epsilon_0 =b_0$, we update the partition such that $\mathcal{S}^{\epsilon_0 +b_0} =  S^{\epsilon_0} \cup \{j\}$ and recompute  $\bm{r_j}$ for all $j \in \mathcal{S}$. Following (\ref{update-r}), we consider new perturbations $\Delta \epsilon_1$ from $\epsilon_1 = \epsilon_0 + b_0$.  thus have that $\mu_j^{\epsilon_1+ \Delta \epsilon_1} = C + r_j \Delta \epsilon_1$, which is linear in $\Delta \epsilon$. The same argument follows for $j\in \mathcal{F}^\epsilon_0$ moving into $j\in \mathcal{S}^{\epsilon_0 + b_0}$ where $\mu_j^{\epsilon_0} = \mu_j^{\epsilon_0 + \Delta \epsilon_0} = 0$ for all $\Delta \epsilon_0 \in (\epsilon_0, \epsilon_0 +b_0)$. For $j \in \mathcal{S}^{\epsilon_0}$ moving to $j \in \mathcal{F}^{\epsilon_0 + b_0}$, $\mu_j^{\epsilon_0 + b_0} = \mu_j^{\epsilon_0} + b_0 r_j = 0$; if moving into $\mathcal{E}^{\epsilon_0 + b_0}$ , $\mu_j^{\epsilon_0 + b_0} = \mu_j^{\epsilon_0} + b_0 r_j = C$. Thus the $\mu_j$ paths are continuous in $\Delta \epsilon$ and between breakpoints (equivalently, in stable regions), they are either constant or linear in $\Delta \epsilon$ and as such are piecewise linear over perturbations $\Delta \epsilon$. 
\end{proof} 

\subsubsection{Proof of Proposition \ref{compare-utility}}
 \begin{proof} 
Fix $\epsilon \in (0,1)$ and consider the stable region of $\Delta \epsilon$ perturbations given by $(b_L, b_U)$. Suppose $b_L = \frac{g_j}{d_j}$ with $j\in \mathcal{E}$, then if $y_j = -1$, $\hat{y_j}  = +1$. Thus at the breakpoint $\Delta \epsilon = b_L$, $j$ moves into $\mathcal{S}^{^\epsilon + b_L}$ and $\hat{y_j} = +1$ and $u_{z_j}(\epsilon +b_L) < u_{z_j}(\epsilon)$ where $z_j$ gives the group membership of $\bx_j$. Since no other points transition,  $u_{\bar{z}}(\epsilon +b_L) = u_{\bar{z}}(\epsilon)$ for all $\bar{z} \neq z_j$. Since $b_L < 0$, the fairness constraint is tightened and associated with a shadow price given by $\gamma > 0$ such that $p(\epsilon + b_L) < p(\epsilon)$. \checkmark 
  
Suppose $b_L = \frac{C-\mu_j^\epsilon}{r_j}$ and $j \in \mathcal{S}^\epsilon$ with $y_j = +1$, then $j$ moves into $j \in \mathcal{E}^{\epsilon + b_L}$ such that $\hat{y_j} = -1$. Thus $u_{z_j}(\epsilon +b_L) < u_{z_j}(\epsilon)$ and $u_{\bar{z}}(\epsilon +b_L) = u_{\bar{z}}(\epsilon)$ where $z_j$ is the group membership of $\bx_j$ and $\bar{z} \neq z_j$, and $p(\epsilon + b_L) \le p(\epsilon)$. \checkmark

Suppose $b_U = \frac{g_j}{d_j} > 0$ where $j \in \mathcal{E}^\epsilon$, $y_j = +1$, and $\hat{y_j} = -1$. At the breakpoint, $j$ moves into $\mathcal{S}^{\epsilon + b_U}$ such that $y_j = -1$. Then $u_{{z_j}}(\epsilon +b_U) > u_{{z_j}}(\epsilon)$ where ${z_j}$ is the group membership of $\bx_j$. For $\bar{z} \neq z_j$, $u_{{\bar{z}}}(\epsilon +b_U) = u_{{\bar{z}}}(\epsilon)$, and since $b_U > 0$, the fairness constraint is loosened and $p(\epsilon+b_U) > p(\epsilon)$. 

Suppose $b_U = \frac{C-\mu_j^\epsilon}{r_j} > 0$ where $j \in \mathcal{S}^\epsilon$ and $y_j = -1$. At the breakpoint, $j$ moves into $\mathcal{E}^{\epsilon + b_U}$ such that $\hat{y_j} = +1$. Then $u_{{z_j}}(\epsilon +b_U) > u_{{z_j}}(\epsilon)$ where ${z_j}$ gives the group membership of $\bx_j$. For $\bar{z} \neq z_j$, $u_{{\bar{z}}}(\epsilon +b_U) = u_{{\bar{z}}}(\epsilon)$, and since $b_U > 0$, the fairness constraint is loosened and $p(\epsilon+b_U) \ge p(\epsilon)$. \checkmark
 \end{proof}
 
\subsubsection{Proof of Theorem \ref{pareto-thm}}
\begin{proof}
Theorem \ref{pareto-thm} follows from Proposition \ref{stable-perturbations}, Corollary \ref{corollary-stable}, Lemma \ref{lemma-piecewise}, and Proposition \ref{compare-utility}. 
\end{proof}

 \end{document}